\newsavebox{\ieeealgbox}
\newenvironment{boxedalgorithmic}
  {\begin{lrbox}{\ieeealgbox}
   \begin{minipage}{\dimexpr\columnwidth-2\fboxsep-2\fboxrule}
   \begin{algorithmic}}
  {\end{algorithmic}
   \end{minipage}
   \end{lrbox}\noindent\fbox{\usebox{\ieeealgbox}}}
\newcolumntype{Y}{>{\centering\arraybackslash}X}
\newtheorem{theorem}{Theorem}[section]
\newtheorem{lemma}[theorem]{Lemma}
\newtheorem{proposition}[theorem]{Proposition}
\newtheorem{definition}[theorem]{Definition}
\newenvironment{remark}[1][Remark]{\begin{trivlist}
\item[\hskip \labelsep {\bfseries #1}]}{\end{trivlist}}
\newcommand\cmt[1]{{\color{red} #1}}
\newcommand\gcmt[1]{{\color{green} #1}}
\newcommand\modif[1]{{\color{black} #1}}
\begin{document}
%
\title{Flexible Multi-layer Sparse Approximations of Matrices and Applications}
%
%
%

\author{Luc~Le Magoarou,
        R\'emi~Gribonval,~\IEEEmembership{Fellow,~IEEE}
\thanks{Luc~Le Magoarou (\href{mailto:luc.le-magoarou@inria.fr}{\texttt{luc.le-magoarou@inria.fr}}) and R\'emi Gribonval (\href{mailto:remi.gribonval@inria.fr}{\texttt{remi.gribonval@inria.fr}}) are both with Inria, Rennes, France, PANAMA team. This work was supported in part by the European Research Council, PLEASE project (ERC-StG- 2011-277906). Parts of this work have been presented at the conferences ICASSP 2015  \cite{Lemagoarou2014} and EUSIPCO 2015 \cite{Lemagoarou2015}. Copyright (c) 2014 IEEE. Personal use of this material is permitted. However, permission to use this material for any other purposes must be obtained from the IEEE by sending a request to \href{mailto:pubs-permissions@ieee.org}{pubs-permissions@ieee.org}.
}
}

%
%

\markboth{IEEE Journal of selected topics in signal processing}%
{Shell \MakeLowercase{\textit{et al.}}: Bare Demo of IEEEtran.cls for Journals}
%



\maketitle

\begin{abstract}
The computational cost of many signal processing and machine learning techniques is often dominated by the cost of applying certain linear operators to high-dimensional vectors. This paper introduces an algorithm aimed at reducing the complexity of applying linear operators in high dimension by approximately factorizing the corresponding matrix into few sparse factors. 
The approach relies on recent advances in non-convex optimization. It is first explained and analyzed in details and then demonstrated experimentally on various problems including dictionary learning for image denoising, and the approximation of large matrices arising in inverse problems.
\end{abstract}

\begin{IEEEkeywords}
Sparse representations, fast algorithms, dictionary learning, low
complexity, image denoising, inverse problems.
\end{IEEEkeywords}

%
\IEEEpeerreviewmaketitle

\section{Introduction}
%
%
%
%
\IEEEPARstart{S}{parsity} has been at the heart of a plethora of signal processing and data analysis techniques over the last two decades. These techniques usually impose that the objects of interest be sparse in a certain domain. They owe their success to the fact that sparse objects are easier to manipulate and more prone to interpretation than dense ones especially in high dimension. However, to efficiently manipulate high-dimensional data, it is not sufficient to rely on sparse objects: efficient operators are also needed to manipulate these objects. 
 
The $n$-dimensional Discrete Fourier Transform (DFT) is certainly the most well known linear operator with an efficient implementation: the Fast Fourier Transform (FFT) \cite{CooleyTukey1965}, allows to apply the operator in $\mathcal{O}(n\log n)$ arithmetic operations instead of $\mathcal{O}(n^2)$ in its dense form. Similar complexity savings have been achieved for other widely used operators such as the Hadamard transform \cite{Shanks1969}, the Discrete Cosine Transform (DCT) \cite{Wen1977} or the Discrete Wavelet Transform (DWT)\cite{Mallat1989}. For all these fast linear transforms, the matrix $\mathbf{A}$ corresponding to the dense form of the operator admits a {\em multi-layer sparse} expression,
\begin{equation}
\mathbf{A} = \prod_{j=1}^J\mathbf{S}_j,
\label{eq:spop}
\end{equation} 
corresponding to a multi-layer factorization\footnote{The product being taken from right to left: $\prod_{j=1}^J \mathbf{S}_j = \mathbf{S}_J \cdots \mathbf{S}_1$}
 into a small number $J$ of sparse factors $\mathbf{S}_j$. 
 Following the definition of a linear algorithm given in \cite{Morgenstern1975}, this multi-layer sparse factorization is actually the natural representation of any fast linear transform.

\begin{figure}[htbp]
\includegraphics[width=\columnwidth]{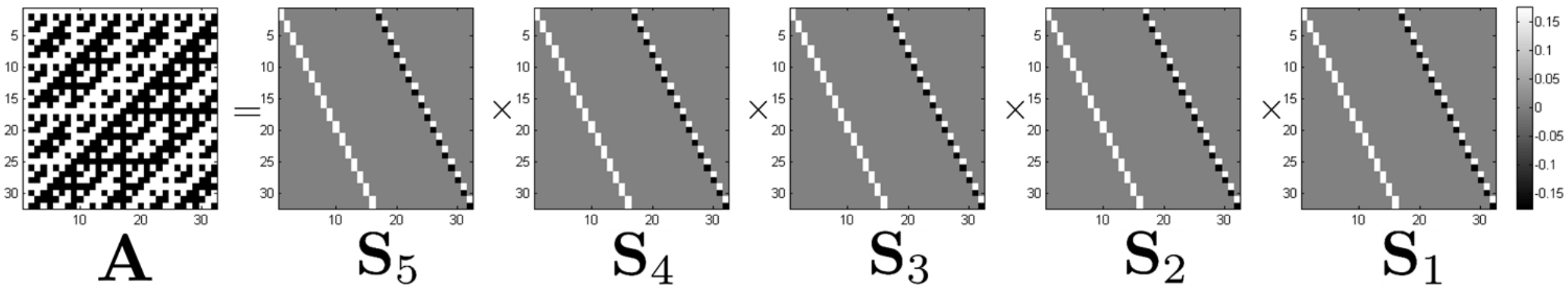}
\caption{The Hadamard matrix of size $n\times n$ with $n=32$ (left) and its factorization. The matrix is totally dense so that the naive storage and multiplication cost $\mathcal{O}(n^2=1024)$. On the other hand, we show the factorization of the matrix into $\log_2(n) = 5$ factors, each having $2n = 64$ non-zero entries, so that the storage and multiplication in the factorized form cost \mbox{$\mathcal{O}(2n\log_2(n)=320)$.}}
\vspace{-0.2cm}
\label{fig:facthadamard}
\end{figure}

For example each step of the butterfly radix-2 FFT can be seen as the multiplication by a sparse matrix having only two non-zero entries per row and per column. This fact is further illustrated in the case of the Hadamard transform on Figure~\ref{fig:facthadamard}. For other examples, see e.g. \cite[Appendix A]{Lemagoarou2014}.

Inspired by these widely used transforms, our objective is to find approximations of operators of interest encountered in concrete applications, as products of sparse matrices as in \eqref{eq:spop}. Such approximations will be called \emph{Flexible Approximate MUlti-layer Sparse Transforms (FA$\mu$ST)}.

As a primary example of potential application of such approximations, consider \emph{linear inverse problems}, where data and model parameters are linked through a linear operator. 
State of the art algorithms addressing such problems with sparse regularization \cite{Mallat1993,Daubechies2004,Tropp2007,Blumensath2008,Beck2009} are known to heavily rely on matrix-vector products involving both this operator and its adjoint. As illustrated in Section~\ref{sec:MEG} on a biomedical inverse problem, replacing the operator by an accurate FA$\mu$ST has the potential to substantially accelerate these methods. 

To choose a regularizer for inverse problems,  \emph{dictionary learning} is a common method used to learn the domain in which some training data admits a sparse representation \cite{Rubinstein2010}. Its applicability is however also somewhat limited by the need to compute many matrix-vector products involving the learned dictionary and its adjoint, which are in general dense matrices. We will see that recent approaches to learn fast dictionaries\cite{Rubinstein2010a,Chabiron2013} can be seen as special cases of the {\em FA$\mu$ST dictionary learning} approach developed in Section~\ref{sec:diclearn}, where the learned dictionaries are constrained to be FA$\mu$ST.
 
Beyond the above considered examples, any task where it is required to apply a linear operator in high dimension would obviously benefit from a FA$\mu$ST corresponding to the considered operator.   
For example,  in the emerging area of \emph{signal processing on graphs} \cite{Shuman2013}, novel definitions of usual operators such as the Fourier or wavelet transforms have been introduced. They have no known general sparse forms, and consequently no associated fast algorithms. Finding multi-layer sparse approximations of these usual operators on graphs would certainly boost the dissemination and impact of graph signal processing techniques.

\noindent{\bf Objective.} 
The quest for multi-layer sparse approximations of large linear operators, which is the core objective of this paper, actually amounts to a matrix factorization problem, where the matrix $\mathbf{A}$ corresponding to the dense form of the operator is to be decomposed into the product of sparse factors $\mathbf{S}_j$, so as to satisfy an approximate form of \eqref{eq:spop}.

 \noindent{\bf Contributions.} 
 This paper substantially extends
the preliminary work started in \cite{Lemagoarou2014}, \cite{Lemagoarou2015} and \cite{Lemagoarou2015a}, both on the theoretical and experimental sides, with the following contributions:
\begin{itemize}
\item A general framework for multi-layer sparse approximation (MSA) is introduced, that allows to incorporate various constraints on the sought sparse form;
\item Recent advances in non-convex optimization \cite{Bolte2013} are exploited to tackle the resulting non-convex optimization problem with local convergence guarantees; 
\item A heuristic hierarchical factorization algorithm leveraging these optimization techniques is proposed, that achieves factorizations empirically stable to initialization;
\item The versatility of the framework is illustrated with extensive experiments on two showcase applications, linear inverse problems and dictionary learning, demonstrating its practical benefits.
\end{itemize}

The remaining of the paper is organized as follows. The problem is formulated, related to prior art and the expected benefits of FA$\mu$STs are systematically explained in Section~\ref{sec:formul}. A general optimization framework for the induced matrix factorization problem is introduced in Section~\ref{sec:optim} and Section~\ref{ssec:practical}, and as a first illustration we demonstrate that it is possible to {\em reverse-engineer} the Hadamard transform. Several applications and experiments on various tasks, illustrating the versatility of the proposed approach are performed in sections~\ref{sec:MEG} and~\ref{sec:diclearn}.

\section{Problem formulation}
\label{sec:formul}
\noindent{\bf Notation.} Throughout this paper, matrices are denoted by bold upper-case letters: $\mathbf{A}$; vectors  by bold lower-case letters: $\mathbf{a}$; the $i$th column of a matrix $\mathbf{A}$ by: $\mathbf{a}_i$; 
and sets by calligraphic symbols: $\mathcal{A}$. The standard vectorization operator is denoted by $\text{vec}(\cdot)$. The $\ell_0$-norm is denoted by $\left\Vert\cdot\right\Vert_0$ (it counts the number of non-zero entries), $\left\Vert\cdot\right\Vert_F$ denotes the Frobenius norm, and $\left\Vert\cdot\right\Vert_{2}$ the spectral norm. By abuse of notations, $\|\mathbf{A}\|_{0} = \|\text{vec}(\mathbf{A})\|_{0}$. The identity matrix is denoted $\mathbf{Id}$.

\subsection{Objective}
\label{ssec:objective}
The goal of this paper is to introduce a method to get a FA$\mu$ST associated to an operator of interest. Consider a linear operator corresponding to the matrix $\mathbf{A} \in \mathbb{R}^{m\times n}$. The objective is to find sparse factors $\mathbf{S}_j \in \mathbb{R}^{a_{j+1}\times a_j}, j \in \{1\ldots J\}$ with $a_1 = n$ and $a_{J+1} = m$ such that $\mathbf{A}\approx \prod_{j=1}^J \mathbf{S}_j$. This naturally leads to an optimization problem of the form:
\begin{equation}
\begin{array}{c}
\underset{\mathbf{S}_1, \ldots,\mathbf{S}_{J}}{\text{Minimize }}\quad   \underbrace{\big\Vert\mathbf{A} - \prod\limits_{j=1}^{J}\mathbf{S}_j\big\Vert^2}_{\text{Data fidelity}} + \underbrace{\sum\limits_{j=1}^{J}g_j(\mathbf{S}_j)}_{\text{Sparsity-inducing penalty}},
\end{array}
\label{eq:verygeneralproblem}
\end{equation}
to trade-off data fidelity and sparsity of the factors.

\subsection{Expected benefits of FA$\mu$STs}
\label{ssec:advantages}
A multi-layer sparse approximation of an operator $\mathbf{A}$ brings several benefits, provided the {\em relative complexity} of the factorized form is small with respect to the dimensions of $\mathbf{A}$. For the sake of conciseness, let us introduce $s_j = \left\Vert \mathbf{S}_j \right\Vert_0$ the total amount of non-zero entries in the $j$th factor, and $s_{tot} = \sum_{j=1}^J s_j$ the total number of non-zero entries in the whole factorization. 
\begin{definition}
The Relative Complexity (abbreviated RC) is the ratio between the total number of non-zero entries in the FA$\mu$ST and the number of non-zero entries of $\mathbf{A}$: 
\begin{equation}
\text{RC} := \frac{s_{tot}}{\left\Vert \mathbf{A} \right\Vert_0} .
\label{RC} 
\end{equation} 
It is also interesting to introduce the Relative Complexity Gain (RCG), which is simply the inverse of the Relative Complexity ($\text{RCG} = 1/\text{RC}$).
\end{definition}
The aforementioned condition for the factorized form to be beneficial writes: $\text{RC}\ll 1$ or equivalently $\text{RCG}\gg 1$.

FA$\mu$STs reduce computational costs in all aspects of their manipulation, namely a lower \emph{Storage cost}, a higher \emph{Speed of multiplication} and an improved \emph{Statistical significance}.

\subsubsection{Storage cost} 
 Using the Coordinate list (COO) storage paradigm \cite{scipy}, one can store a FA$\mu$ST using $\mathcal{O}(s_{tot})$ floats and integers. Indeed each non-zero entry (float) in the factorization can be located using three integers (one for the factor, one for the row and one for the column), which makes $s_{tot}$ floats and $3s_{tot}$ integers to store. One needs also $J+1$ supplementary integers to denote the size of the factors $a_1$ to $a_{J+1}$. In summary the storage gain is of the order of RCG.
\subsubsection{Speed of multiplication}
Applying the FA$\mu$ST or its transpose to a vector $\mathbf{v} \in \mathbb{R}^n$ can be easily seen to require at most $\mathcal{O}(s_{tot})$ floating point operations (flops), instead of $\mathcal{O}(mn)$ for a classical dense operator of same dimension, so the computational gain is, like the storage gain, of the order of RCG. 

\subsubsection{Statistical significance}

 Another interesting though less obvious benefit of FA$\mu$STs over dense operators arises when the operator has to be estimated from training data as in dictionary learning. In this case the reduced number of parameters to learn --$\mathcal{O}(s_{tot})$ compared to $\mathcal{O}(mn)$ for dense operators-- leads to better statistical properties. More specifically, the sample complexity is reduced \cite{Gribonval2015}, and better generalization properties are expected. The sample complexity gain is of the order of RCG, as will be shown in the case of dictionary learning. The impact of these gains will be illustrated experimentally in section~\ref{sec:diclearn} on image denoising with learned dictionaries.

\subsection{Related work}
\label{ssec:relwork}

Similar matrix factorization problems have been studied in several domains. Some are very classical tools from numerical linear algebra, such as the truncated SVD, while other emerged more recently in signal processing and machine learning.

\subsubsection{The truncated SVD}
To reduce the computational complexity of a linear operator, the most classical approach is perhaps to compute a low-rank approximations with the truncated SVD. Figure~\ref{fig:compSVD} compares the approximation-complexity trade-offs achieved via a low-rank approximation (truncated SVD) and via a multi-layer sparse approximation, on a forward operator associated to an MEG inverse problem. The truncated SVD and four FA$\mu$STs computed using  different configurations (more details in Section~\ref{sec:MEG} - Figure~\ref{fig:MEG}) are compared in terms of relative operator norm error: $\small \Vert\mathbf{A}- \hat{\mathbf{A}}\Vert_2/\Vert\mathbf{A}\Vert_2$. It is readily observed that the FA$\mu$STs achieve significantly better complexity/error trade-offs.

 \begin{figure}[h]
    \centering
        \includegraphics[width=1\columnwidth]{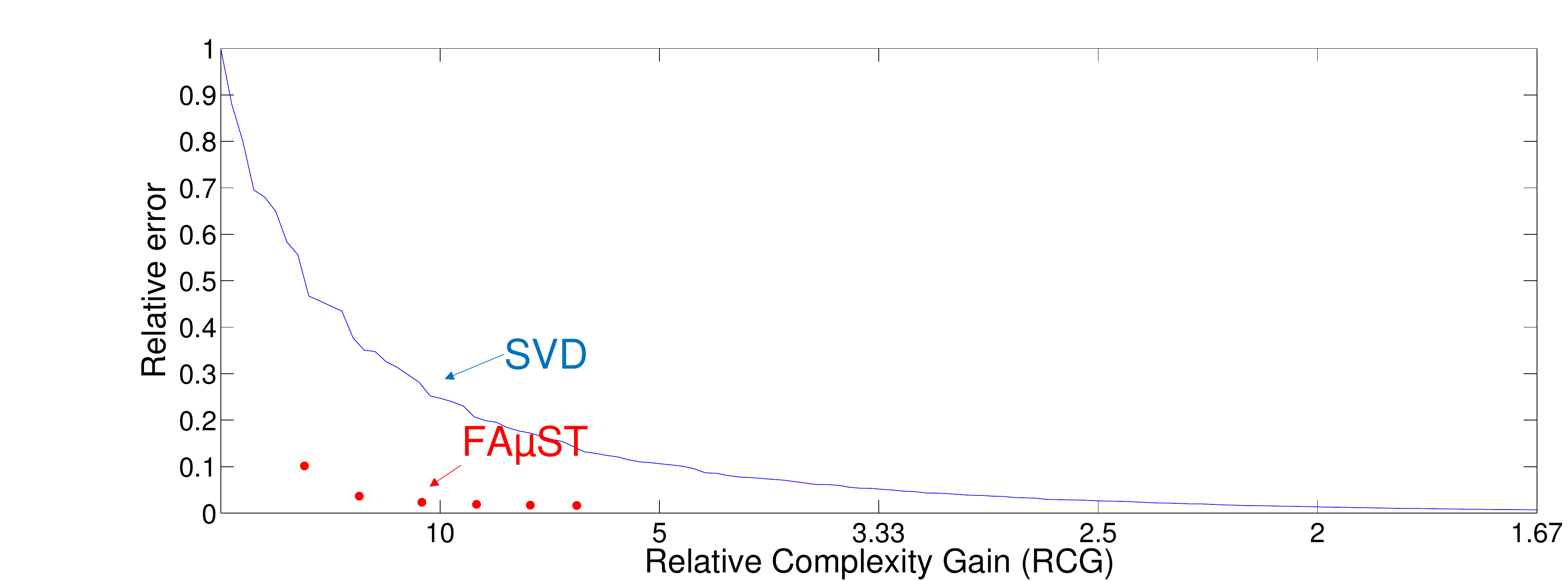}
        \vspace{-0.6cm}
    \caption{Comparison between four multi-layer sparse approximations correponding to different configurations and the truncated SVD. A $204\times 8193$ matrix associated to an MEG inverse problem is used for this example.}  
    \vspace{-0.1cm}
    \label{fig:compSVD}
\end{figure}

\subsubsection{Local low-rank approximations}
\label{sssec:loclow} 
Given the limitations of global low-rank approximation by truncated SVD illustrated in Figure~\ref{fig:compSVD}, the numerical linear algebra community has developed {\em local} approximations of operators by low-rank patches. \modif{This general operator compression paradigm encompasses several methods introduced in the last three decades, including the Fast Multipole Method (FMM) \cite{Rokhlin1985}, H-matrices \cite{Hackbusch1999} and others \cite{Candes2007}.  All the difficulty of these methods resides in the choice of the patches, which is done according to the regularity of the subsumed continuous kernel.} This can be seen as approximating the operator by a FA$\mu$ST, where the support of each factor is determined analytically. The approach proposed in this paper is data-driven rather than analytic. 

\modif{
\subsubsection{Wavelet-based compression}
\label{sssec:wavcomp}
 This operator compression paradigm, introduced in \cite{Beylkin1991}, is based on the use of orthogonal wavelets in the column domain (associated to a matrix $\boldsymbol{\Phi}_1$), and in the row domain (associated to a matrix $\boldsymbol{\Phi}_2$) of the matrix $\mathbf{A}$ to approximate. By orthogonality of the matrices $\boldsymbol{\Phi}_1$ and $\boldsymbol{\Phi}_2$, we have $\mathbf{A} = \boldsymbol{\Phi}_1\boldsymbol{\Phi}_1^T \mathbf{A} \boldsymbol{\Phi}_2\boldsymbol{\Phi}_2^T$. The wavelet-based compression scheme relies on the fact that $\mathbf{B}\triangleq \boldsymbol{\Phi}_1^T \mathbf{A} \boldsymbol{\Phi}_2$ is compressible (provided appropriate wavelets are chosen relative to the subsumed kernel). This implies $\mathbf{B}\approx \hat{\mathbf{B}}$ where $\hat{\mathbf{B}}$ is sparse so that $\mathbf{A} \approx \hat{\mathbf{A}} =  \boldsymbol{\Phi}_1\hat{\mathbf{B}}\boldsymbol{\Phi}_2^T$. Fast multiplication by $\hat{\mathbf{A}}$ is possible as soon as $\hat{\mathbf{B}}$ is sparse enough and the wavelet transforms $\boldsymbol{\Phi}_1$ and $\boldsymbol{\Phi}_2$ have fast implementations. This can be seen as approximating $\mathbf{A}$ by a FA$\mu$ST.}

\subsubsection{Dictionary learning} 
Given a collection of training vectors $\mathbf{y}_{\ell}$, $1 \leq \ell \leq L$ gathered as the columns of a matrix $\mathbf{Y}$, the objective of dictionary learning \cite{Rubinstein2010,Tosic2011} is to approximate $\mathbf{Y}$ by the product of a dictionary $\mathbf{D}$ and a coefficients matrix $\boldsymbol{\Gamma}$ with sparse columns, $\mathbf{Y} \approx \mathbf{D}\boldsymbol{\Gamma}$. 

To learn dictionaries with improved computational efficiency, two main lines of work have begun to explore approaches related to multi-layer sparse approximation. In \cite{Rubinstein2010a}, the authors propose the sparse-KSVD algorithm (KSVDS) to learn a dictionary whose atoms are sparse linear combinations of atoms of a so-called \emph{base dictionary} $\mathbf{D}_{\textrm{base}}$. The base dictionary should be associated with a fast algorithm (in practice, this means that it is a FA$\mu$ST) so that the whole learned dictionary is itself a FA$\mu$ST. It can be seen as having the $J-1$ leftmost factors fixed in~\eqref{eq:verygeneralproblem}, their product being precisely $\mathbf{D}_{\textrm{base}}$, while the first factor $\mathbf{S}_{1}$ is the sparse representation of the dictionary over the base dictionary, i.e., $\mathbf{D} = \mathbf{D}_{\textrm{base}} \mathbf{S}_1$. 

A limitation of the sparse-KSVD formulation is that the learned dictionary is highly biased toward the base dictionary, which decreases adaptability to the training data. In \cite{Chabiron2013}, the authors propose to learn a dictionary in which each atom is the composition of several circular convolutions using sparse kernels with known supports, so that the dictionary is a sparse operator that is fast to manipulate. This problem can be seen as~\eqref{eq:verygeneralproblem}, with the penalties $g_j$s associated to the $J-1$ leftmost factors imposing sparse circulant matrices with prescribed supports. This formulation is powerful, as demonstrated in \cite{Chabiron2013}, but limited in nature to the case where the dictionary is well approximated by a product of sparse circulant matrices, and requires knowledge of the supports of the sparse factors.

\subsubsection{Inverse problems} 
In the context of sparse regularization of linear inverse problems, one is given a signal $\mathbf{y}$ and a measurement matrix $\mathbf{M}$ and wishes to compute a sparse code $\boldsymbol{\gamma}$ such that $\mathbf{y} \approx \mathbf{M} \boldsymbol{\gamma}$, see e.g. \cite{Bruckstein2009}. Most modern sparse solvers rely on some form of iterative thresholding and heavily rely on matrix-vector products with the measurement matrix and its transpose.
 Imposing --and adjusting-- a FA$\mu$ST structure to approximate these matrices as proposed here has the potential to further accelerate these methods through fast matrix-vector multiplications. This is also likely to bring additional speedups to recent approaches accelerating iterative sparse solvers through learning \cite{Gregor2010,Sprechmann2012,Makhzani2013}.

\subsubsection{Statistics -- factor analysis} 
A related problem is to approximately diagonalize a covariance matrix by a unitary matrix in factorized form~\eqref{eq:spop}, which can be addressed greedily \cite{Lee2008,Cao2011}  using a fixed number of elementary Givens rotations. Here we consider a richer family of sparse factors and leverage recent non-convex optimization techniques. 

\subsubsection{Machine learning} 
Similar models were explored with various points of view in machine learning. For example, sparse multi-factor NMF \cite{Lyu2013} can be seen as solving problem~\eqref{eq:verygeneralproblem} with the Kullback-Leibler divergence as data fidelity term and all factors $\mathbf{S}_j$s constrained to be non-negative. Optimization relies on multiplicative updates, while the approach proposed here relies on proximal iterations. 

\subsubsection{Deep learning} 
In the context of deep neural networks, identifiability guarantees on the network structure have been established with a generative model where consecutive network layers are sparsely connected at random, and non-linearities are neglected \cite{Neyshabur2013,Arora2013}. The network structure in these studies matches the factorized structure~\eqref{eq:spop}, with each of the leftmost factors representing a layer of the network and the last one being its input. Apart from its hierarchical flavor, the identification algorithm in~ \cite{Neyshabur2013,Arora2013} has little in common with the proximal method proposed here.

\subsubsection{Signal processing on graphs} 
Similar matrix factorizations problems arise in this domain, with the objective of defining wavelets on graphs. First, in \cite{Kondor2014} the authors propose to approximately diagonalize part of the graph Laplacian operator using elementary rotations. More precisely, the basis in which the Laplacian is expressed is greedily changed, requiring that at each step the change is made by a sparse elementary rotation (so that the wavelet transform is multi-layer sparse), and variables are decorrelated. The Laplacian ends up being diagonal in all the dimensions corresponding to the wavelets and dense in a small part corresponding to the scaling function (the algorithm ends up being very similar to the one proposed in \cite{Cao2011}). Second, in \cite{Rustamov2013} the authors propose to define data adaptive wavelets by factorizing some training data matrix made of signals on the graph of interest. The constraint they impose to the wavelet operator results in a multi-layer sparse structure, where each sparse factor is further constrained to be a lifted wavelet building block. The optimization algorithm they propose relies on deep learning techniques, more precisely layer-wise training of stacked auto-encoders \cite{Bengio2007}.

\section{Optimization framework}
\label{sec:optim}

\subsection{Objective function}
\label{ssec:cobjfunc}
In this paper, 
the penalties $g_j(\cdot)$ 
appearing in the general form of the optimization problem~\eqref{eq:verygeneralproblem} are chosen as indicator functions $\delta_{\mathcal{E}_j}(\cdot)$ of constraint sets of interest $\mathcal{E}_j$. To avoid the scaling ambiguities arising naturally when the constraint sets are (positively) homogeneous\footnote{This is the case of many standard constraint sets. In particular all unions of subspaces, such as the sets of sparse or low-rank matrices, are homogeneous.}, it is common \cite{Chabiron2013,Lyu2013} to normalize the factors and introduce a multiplicative scalar $\lambda$ in the data fidelity term. For that, let us introduce the sets of normalized factors $\mathcal{N}_j = \{\mathbf{S} \in \mathbb{R}^{a_{j+1} \times a_{j}} :  \left\Vert \mathbf{S} \right\Vert_F = 1\}$, and impose the following form for the constraints sets: $\mathcal{E}_j = \mathcal{N}_j \cap \mathcal{S}_j$, where $\mathcal{S}_j$ imposes sparsity explicitly or implicitly.  This results in the following optimization problem:
\begin{equation}
\begin{array}{rl}
\underset{\lambda,\mathbf{S}_1, \ldots,\mathbf{S}_{J}}{\text{Minimize }}\quad \Psi(\mathbf{S}_1, \ldots ,\mathbf{S}_{J},\lambda) :=& \frac{1}{2} \Big\Vert\mathbf{A} - \lambda\prod\limits_{j=1}^{J}\mathbf{S}_j\Big\Vert_F^2\\ & + \sum\limits_{j=1}^{J}\delta_{\mathcal{E}_j}(\mathbf{S}_j).
\end{array}
\label{eq:actualproblemscal}
\end{equation}



As will be made clear below, the used minimization algorithm relies on projections onto the constraint sets $\mathcal{E}_j$: the choice of the ``sparsity-inducing'' part of the constraint sets $\mathcal{S}_j$ is quite free provided that the projection operator onto these sets is known. 

A comon choice is to limit the total number of non-zero entries in the factors to $s_j$. The constraint sets then take the form $\mathcal{E}_j = \{\mathbf{S} \in \mathbb{R}^{a_{j+1} \times a_{j}} :  \left\Vert \mathbf{S} \right\Vert_0 \leq s_j,  \left\Vert \mathbf{S} \right\Vert_F = 1\}$. Another natural choice is to limit to $k_j$ the number of non-zero entries per row or column in the factors, which gives for example in the case of the columns $\mathcal{E}_j = \{\mathbf{S} \in \mathbb{R}^{a_{j+1} \times a_{j}} :  \left\Vert \mathbf{s}_i \right\Vert_0 \leq k_j \, \forall i,  \left\Vert \mathbf{S} \right\Vert_F = 1\}$. Other possible constraint sets can be chosen to further impose non-negativity, a circulant structure, a prescribed support, etc., see for example \cite{Chabiron2013}. 

Besides the few examples given above, many more choices of penalties beyond indicator functions of constraint sets can be envisioned in the algorithmic framework described below. Their choice is merely driven by the application of interest, as long as they are endowed with easy to compute projections onto the constraint sets (in fact, efficient proximal operators), and satisfy some technical assumptions (detailed below) that are very often met in practice. We leave the full exploration of this rich field and its possible applications to further work.

\subsection{Algorithm overview}
\label{ssec:algoverview}
Problem~\eqref{eq:actualproblemscal} is highly non-convex, and the sparsity-inducing penalties are typically non-smooth. Stemming on recent advances in non-convex optimization, it is nevertheless possible to propose an algorithm with convergence guarantees to a stationary point of the problem. In \cite{Bolte2013}, the authors consider cost functions depending on $N$ blocks of variables of the form:
\begin{equation}
\Phi(\mathbf{x}_1,\ldots,\mathbf{x}_N) := H(\mathbf{x}_1,\ldots,\mathbf{x}_N) + \sum\limits_{j=1}^Nf_j(\mathbf{x}_j) ,
\label{eq:PALMobjective}
\end{equation}
where the function $H$ is smooth, and the penalties $f_j$s are proper and lower semi-continuous (the exact assumptions are given below). It is to be stressed that \emph{no convexity} of any kind is assumed. Here, we assume for simplicity that the penalties $f_j$s are indicator functions of constraint sets $\mathcal{T}_j$. To handle this objective function, the authors propose an algorithm called Proximal Alternating Linearized Minimization (PALM)\cite{Bolte2013}, that updates alternatively each block of variable by a proximal (or projected in our case) gradient step. The structure of the PALM algorithm is given in Figure~\ref{fig:algo_summary}, where $P_{\mathcal{T}_j}(\cdot)$ is the projection operator onto the set $\mathcal{T}_j$ and $c^i_j$ defines the step size and depends on the Lipschitz constant of the gradient of $H$.

\begin{figure}[htbp]
PALM (summary) \\
\begin{boxedalgorithmic}[1]  
\FOR{$i \in \{1 \cdots Niter\} $} 
\FOR{$j \in \{1 \cdots N\} $}
\STATE Set $\mathbf{x}_j^{i+1} = P_{\mathcal{T}_j}\Big(\mathbf{x}_j^{i} - \frac{1}{c^i_j}\nabla_{\mathbf{x}_j}H\big(\mathbf{x}_1^{i+1}\scriptsize{\ldots}\mathbf{x}_j^{i}\scriptsize{\ldots}\mathbf{x}_{N}^{i}\big)\Big)$
\ENDFOR
\ENDFOR
\end{boxedalgorithmic}
\caption{PALM algorithm (summary).}
\vspace{-0.2cm}
\label{fig:algo_summary}
\end{figure}

The following conditions are sufficient (not necessary) to ensure that each bounded sequence generated by PALM converges to a stationary point of its objective \cite[Theorem 3.1]{Bolte2013} (the sequence converges, which implies convergence of the value of the cost function):
\begin{enumerate}[(i)]
\item The $f_j$s are proper and lower semi-continuous.
\item $H$ is smooth.
\item $\Phi$ is semi-algebraic \cite[Definition 5.1]{Bolte2013}.
\item $\nabla_{\mathbf{x}_j}H$ is globally Lipschitz for all $j$, with Lipschitz moduli $L_j(\mathbf{x}_1\scriptsize{\ldots}\mathbf{x}_{j-1},\mathbf{x}_{j+1}\scriptsize{\ldots}\mathbf{x}_{N})$.
\item $\forall i,j$, $c^i_j>L_j(\mathbf{x}_1^{i+1}\scriptsize{\ldots}\mathbf{x}_{j-1}^{i+1},\mathbf{x}_{j+1}^{i}\scriptsize{\ldots}\mathbf{x}_{N}^{i})$ (the inequality need not be strict for convex $f_j$).
\end{enumerate}

\subsection{Algorithm details}
\label{ssec:algdetails}
 PALM can be instantiated for the purpose of handling the objective of \eqref{eq:actualproblemscal}. It is quite straightforward to see that there is a match between \eqref{eq:actualproblemscal} and \eqref{eq:PALMobjective} by taking $N = J+1$, $\mathbf{x}_j = \mathbf{S}_j$ for $j \in\{1\ldots J\}$, $\mathbf{x}_{M+1} = \lambda$, $H$ as the data fidelity term, $f_j(\cdot) = \delta_{\mathcal{E}_{j}}(.)$ for $j \in\{1\ldots J\}$ and $f_{J+1}(\cdot) = \delta_{\mathcal{E}_{J+1}}(\cdot)=\delta_{\mathbb{R}}(\cdot) = 0$ (there is no constraint on $\lambda$). This match allows to apply PALM to compute multi-layer sparse approximations, with guaranteed convergence to a stationary point. 


\subsubsection{Projection operator} PALM relies on projections onto the constraint sets for each factor at each iteration, so the projection operator should be simple and easy to compute. For example, in the case where the $\mathcal{E}_j$s are sets of sparse normalized matrices, namely $\mathcal{E}_j = \{\mathbf{S} \in \mathbb{R}^{a_j \times a_{j+1}} : \left\Vert \text{vec}(\mathbf{S}) \right\Vert_0 \leq s_j, \left\Vert \mathbf{S} \right\Vert_F = 1\}$ for $j \in\{1\ldots J\}$, then the projection operator $P_{\mathcal{E}_j}(\cdot)$ simply keeps the $s_j$ greatest entries (in absolute value) of its argument, sets all the other entries to zero, and then normalizes its argument so that it has unit norm (the proof is given in Appendix \ref{app:projop}). Regarding $\mathcal{E}_{J+1} = \mathbb{R}$, the projection operator is the identity mapping. The projection operators for other forms of sparsity constraints that could be interesting in concrete applications are also given in Appendix \ref{app:projop}: Proposition~\ref{prop:proxsparsity} covers the following examples: 
\begin{itemize}
\item Global sparsity constraints.
\item Row or column sparsity constraints.
\item constrained support.
\item Triangular matrices constraints.
\item Diagonal matrices constraints.
\end{itemize}
Proposition~\ref{prop:piecewisesparsity} covers in addition:
\begin{itemize}
\item Circulant, Toeplitz or Hankel matrices with fixed support or prescribed sparsity.
\item Matrices that are constant by row or column.
\item More general classes of piece-wise constant matrices with possible sparsity constraints.
\end{itemize}

\subsubsection{Gradient and Lipschitz modulus}
 To specify the iterations of PALM specialized to the multi-layer sparse approximation problem, let us fix the iteration $i$ and the factor $j$, and denote $\mathbf{S}^i_j$  the factor being updated, $\mathbf{L} := \prod_{\ell=j+1}^{J} \mathbf{S}^{i}_\ell$ what is on its left and $\mathbf{R} := \prod_{\ell=1}^{j-1} \mathbf{S}^{i+1}_\ell$ what is on its right (with the convention $\prod_{\ell \in \varnothing} \mathbf{S}_\ell = \mathbf{Id}$). These notations give, when updating the $j$th factor $\mathbf{S}^i_{j}$:
\(
H(\mathbf{S}_1^{i+1},\ldots,\mathbf{S}_{j-1}^{i+1},\mathbf{S}_j^i,\ldots,\mathbf{S}_{J}^{i},\lambda^i) = H(\mathbf{L},\mathbf{S}_j^i,\mathbf{R},\lambda^i) = \tfrac{1}{2}
\| \mathbf{A} - \lambda^i\mathbf{L}\mathbf{S}^i_j\mathbf{R} 
\|_F^2.
\)
 The gradient of this smooth part of the objective with respect to the $j$th factor reads:
\begin{equation*}
\nabla_{\mathbf{S}^i_j}H(\mathbf{L},\mathbf{S}_j^i,\mathbf{R},\lambda^i) = {\lambda^i}\mathbf{L}^T(\lambda^i\mathbf{L}\mathbf{S}^i_j\mathbf{R} - \mathbf{A})\mathbf{R}^T ,
\end{equation*}
which Lipschitz modulus with respect to $\left\Vert \mathbf{S}_j^i \right\Vert_F$ is $L_j(\mathbf{L},\mathbf{R},\lambda^i) = (\lambda^i)^2\left\Vert \mathbf{R} \right\Vert_2^2. \left\Vert \mathbf{L} \right\Vert_2^2$ (as shown in Appendix \ref{app:lipmod}). Once all the $J$ factors are updated, let us now turn to the update of $\lambda$. Denoting $\hat{\mathbf{A}} = \prod_{j=1}^{J} \mathbf{S}^{i+1}_j$ brings:
\mbox{
\(
H(\mathbf{S}_1^{i+1},\ldots,\mathbf{S}_{J}^{i+1},\lambda^i) = \tfrac{1}{2}\| \mathbf{A} - \lambda^i\hat{\mathbf{A}} \|_F^2,
\)
}
and the gradient with respect to $\lambda^i$ reads:
\begin{equation*}
\nabla_{\lambda^i}H(\mathbf{S}_1^{i+1},\ldots,\mathbf{S}_{J}^{i+1},\lambda^i) = \lambda^i \text{Tr}(\hat{\mathbf{A}}^T\hat{\mathbf{A}}) - \text{Tr}(\mathbf{A}^T\hat{\mathbf{A}}). 
\end{equation*}

 
 \subsubsection{Default initialization, and choice of the step size}\label{sec:defaultinit} Except when specified otherwise, the default initialization is with $\lambda^{0}=1$, $\mathbf{S}_1^0 = \mathbf{0}$, and $\mathbf{S}_j^0 = \mathbf{Id}$ for $j \geq 2$, with the convention that for rectangular matrices the identity has ones on the main diagonal and zeroes elsewhere. In practice the step size is chosen by taking $c_j^i = (1+\alpha) .(\lambda^i)^2\left\Vert \mathbf{R} \right\Vert_2^2. \left\Vert \mathbf{L} \right\Vert_2^2$ with $\alpha = 10^{-3}$. Such a determination of the step size is computationally costly, and alternatives could be considered in applications (a decreasing step size rule for example).
 
 \subsubsection{Summary}
 An explicit version of the algorithm, called PALM for Multi-layer Sparse Approximation  (\texttt{palm4MSA}), is given in Figure~\ref{fig:algo_explicit}, in which the factors are updated alternatively by a projected gradient step (line 6) with a step-size controlled by the Lipschitz modulus of the gradient (line 5). We can solve for $\lambda$ directly at each iteration (line 9) because of the absence of constraint on it (thanks to the second part of the convergence condition (v) of PALM).

\begin{figure}[htbp]
PALM for Multi-layer Sparse Approximation (\texttt{palm4MSA})
\begin{boxedalgorithmic}[1] 
\REQUIRE{Operator $\mathbf{A}$; desired number of factors $J$; constraint sets $\mathcal{E}_j, \: j \in \{1\ldots J\}$; initialization $\{\mathbf{S}_j^0\}_{j=1}^J$, $\lambda^{0}$; stopping criterion (e.g., number of iterations $N$).}
\FOR{$i=0$ to $N-1$}
\FOR{$j=1$ to $J$}
\STATE  $\mathbf{L} \leftarrow \prod_{\ell=j+1}^{J} \mathbf{S}^{i}_\ell$
\STATE  $\mathbf{R} \leftarrow \prod_{\ell=1}^{j-1} \mathbf{S}^{i+1}_\ell$
\STATE Set $c^i_j > (\lambda^i)^2\left\Vert \mathbf{R} \right\Vert_2^2. \left\Vert \mathbf{L} \right\Vert_2^2$
\STATE $\mathbf{S}^{i+1}_j \leftarrow P_{\mathcal{E}_j}\Big(\mathbf{S}^{i}_j - \frac{1}{c^i_j}\lambda^{i}\mathbf{L}^T(\lambda\mathbf{L}\mathbf{S}^i_j\mathbf{R} - \mathbf{A})\mathbf{R}^T\Big)$
\ENDFOR
\STATE  $\hat{\mathbf{A}} \leftarrow \prod_{j=1}^{J} \mathbf{S}^{i+1}_j$
\STATE  $\lambda^{i+1} \leftarrow \frac{\text{Tr}(\mathbf{A}^T\hat{\mathbf{A}})}{\text{Tr}(\hat{\mathbf{A}}^T\hat{\mathbf{A}})}$
\ENDFOR
\ENSURE The estimated factorization:

 $\lambda^{N}$,$\{\mathbf{S}^{N}_j\}_{j=1}^{J}$ = \texttt{palm4MSA}($\mathbf{A}$, $J$, $\{\mathcal{E}_j\}_{j=1}^{J}$, \ldots)
\end{boxedalgorithmic}
\caption{PALM algorithm for multi-layer sparse approximation.}
\vspace{-0.5cm}
\label{fig:algo_explicit}
\end{figure}

\section{Hierarchical factorization}
\label{ssec:practical}
The algorithm presented in Figure \ref{fig:algo_explicit} factorizes an input matrix corresponding to an operator of interest into $J$ sparse factors and converges to a stationary point of the problem stated in \eqref{eq:actualproblemscal}. In practice, one is only interested in the stationary points where the data fitting term of the cost function is small, however as for any generic non-convex optimization algorithm there is no general convergence guarantee to such a stationary point.  This fact is illustrated by a very simple experiment where the algorithm \texttt{palm4MSA} is applied to an input operator $\mathbf{A} \in \mathbb{R}^{n \times n}$ with a known sparse form $\mathbf{A}  = \prod_{j=1}^N \mathbf{S}_j$, such as the Hadamard transform (in that case $N = \log_2n$).
The naive approach consists in setting directly $J=N$ in \texttt{palm4MSA}, and setting the constraints so as to reflect the actual sparsity of the true factors (as depicted in Figure~\ref{fig:facthadamard}). This simple strategy performs quite poorly in practice for most initializations, and the attained local minimum is very often not satisfactory (the data fidelity part of the objective function is large). 

\subsection{Parallel with deep learning}
Similar issues were faced in the neural network community, where it was found difficult to optimize the weights of neural networks comprising many hidden layers (called deep neural networks, see \cite{Bengio2009} for a survey on the topic). Until recently, deep networks were often neglected in favor of shallower architectures. However in the last decade, it was proposed \cite{Hinton2006} to optimize the network not as one big block, but one layer at a time, and then globally optimizing the whole network using gradient descent. This heuristic was shown experimentally to work well on various tasks \cite{Bengio2007}. More precisely, what was proposed is to perform first a \emph{pre-training} of the layers (each being fed the features produced by the one just below, and the lowermost being fed the data), in order to initialize the weights in a good region to perform then a global \emph{fine tuning} of all layers by simple gradient descent.




\subsection{Proposed hierarchical algorithm}

We noticed experimentally that taking fewer factors ($J$ small) and allowing more non-zero entries per factor led to better approximations. This suggested to adopt a hierarchical strategy reminiscent of pre-training of deep networks, in order to iteratively compute only factorization with $2$ factors. Indeed, when $\mathbf{A}  = \prod_{j=1}^{N} \mathbf{S}_j$ is the product of $N$ sparse factors, it is also the product $\mathbf{A}  = \mathbf{T}_1\mathbf{S}_1$ of $2$ factors with $\mathbf{T}_1 = \prod_{j=2}^{N} \mathbf{S}_j$, so that $\mathbf{S}_1$ is sparser than $\mathbf{T}_1$. 

\subsubsection{Optimization strategy} 
The proposed hierarchical strategy consists in iteratively factorizing the input matrix $\mathbf{A}$ into $2$ factors, one being sparse (corresponding to $\mathbf{S}_1$), and the other less sparse (corresponding to $\mathbf{T}_1$). The process is repeated on the less sparse factor $\mathbf{T}_1$ until the desired number $J$ of factors is attained. At each step, a global optimization of all the factors introduced so far can be performed in order to fit the product to the original operator $\mathbf{A}$. 

\subsubsection{Choice of sparsity constraint} 
A natural question is that of how to tune the sparsity of the factors and residuals along the process. Denoting $\mathbf{T}_\ell = \prod_{j=\ell+1}^{J} \mathbf{S}_j$, a simple calculation shows that if we expect each $\mathbf{S}_j$ to have roughly $\mathcal{O}(k)$ non-zero entries per row, then $\mathbf{T}_\ell$ cannot have more than $\mathcal{O}(k^{J-(\ell+1)})$ non-zero entries per row. This suggests to decrease exponentially the number of non-zero entries in $\mathbf{T}_\ell$ with $\ell$ and to keep constant $\mathcal{O}(k)$ the number of non-zero entries per row in $\mathbf{S}_j$. This choice of the sparsity constraints is further studied with experiments in Section~\ref{sec:MEG}.

\subsubsection{Implementation details} 
\label{sssec:impdetails}
The proposed hierarchical strategy\footnote{A toolbox implementing all the algorithms and experiments performed in this paper is available at \href{http://faust.gforge.inria.fr}{\texttt{http://faust.gforge.inria.fr}}\newline All experiments were performed in Matlab on an laptop with an intel(R) core(TM) i7-3667U @ 2.00GHz (two cores).} is summarized in the algorithm given in Figure~\ref{fig:algo_hierarchical}, where the constraint sets related to the two factors need to be specified for each step: $\tilde{\mathcal{E}}_\ell$ denotes the constraint set related to the left factor $\mathbf{T}_\ell$, and $\mathcal{E}_\ell$ the one for the right factor $\mathbf{S}_\ell$ at the $\ell$th factorization. 
\begin{figure}[htbp]
Hierarchical factorization ~\vspace{0.6mm} \\ 
\begin{boxedalgorithmic}[1]
\REQUIRE{Operator $\mathbf{A}$; desired number of factors $J$; constraint sets $\tilde{\mathcal{E}}_\ell$ and $\mathcal{E}_\ell, \: \ell \in \{1\ldots J-1\}$.}\STATE $\mathbf{T}_0 \leftarrow \mathbf{A}$
\FOR{$\ell=1$ to $J-1$} 
\STATE Factorize the residual $\mathbf{T}_{\ell-1}$ into $2$ factors:

 $\lambda'$,$\{\mathbf{F}_2,\mathbf{F}_1\}$ = \texttt{palm4MSA}($\mathbf{T}_{\ell-1}$, $2$, $\{\tilde{\mathcal{E}}_\ell,\mathcal{E}_\ell\}$, \texttt{init=default})
\STATE $\mathbf{T}_\ell \leftarrow \lambda'\mathbf{F}_2$ and  $\mathbf{S}_\ell \leftarrow \mathbf{F}_1$
\STATE Global optimization:

 $\lambda$,$\big\{\mathbf{T}_\ell,\{\mathbf{S}_j\}_{j=1}^{\ell}\big\}$ = \texttt{palm4MSA}($\mathbf{A}$, $\ell+1$, $\big\{\tilde{\mathcal{E}}_\ell,\{\mathcal{E}_j\}_{j=1}^{\ell}\big\}$,\texttt{init=current}) 
\ENDFOR
\STATE  $\mathbf{S}_J \leftarrow \mathbf{T}_{J-1}$
\ENSURE The estimated factorization:  $\lambda$,$\{\mathbf{S}_j\}_{j=1}^{J}$.
\end{boxedalgorithmic}
\caption{Hierarchical factorization algorithm.}
\vspace{-0.5cm}
\label{fig:algo_hierarchical}
\end{figure}
Roughly we can say that line $3$ of the algorithm is here to yield complexity savings. Line $5$ is here to improve data fidelity: this global optimization step with \texttt{palm4MSA} is initialized with the current values of  $\mathbf{T}_\ell$ and $\{\mathbf{S}_j\}_{j=1}^{\ell}$. \modif{The hierarchical strategy uses \texttt{palm4MSA} $J-1$ times with an increasing number of factors, and with a good initialization provided by the factorization in two factors. This makes its cost roughly $J-1$ times greater than the cost of the basic \texttt{palm4MSA} with $J$ factors.} 

In greedy layerwise training of deep neural networks,  the factorizations in two (line~3) would correspond to the pre-training and the global optimization (line~5) to the fine tuning. 

 \begin{remark}
The hierarchical strategy can also be applied the other way around (starting {\em from the left}), just by transposing the input. We only present here the version that starts {\em from the right} because the induced notations are simpler. It is also worth being noted that stopping criteria other than the total number of factors can be set. For example, we could imagine to keep factorizing the residual until the approximation error at the global optimization step starts rising or exceeds some pre-defined threshold. 
\end{remark}

\subsection{Illustration: reverse-engineering the Hadamard transform}
\label{ssec:hadamard}
As a first illustration of the proposed approach, we tested the hierarchical factorization algorithm of Figure~\ref{fig:algo_hierarchical} when $\mathbf{A}$ is the dense square matrix associated to the Hadamard transform in dimension $n=2^N$. The algorithm was run with $J = N$ factors, $\tilde{\mathcal{E}}_\ell = \{\mathbf{T} \in \mathbb{R}^{n \times n},  \left\Vert \mathbf{T} \right\Vert_0 \leq \frac{n^2}{2^{\ell}}, \left\Vert \mathbf{T} \right\Vert_F = 1 \}$, and $\mathcal{E}_\ell = \{\mathbf{S} \in \mathbb{R}^{n \times n},  \left\Vert \mathbf{S} \right\Vert_0 \leq 2n, \left\Vert \mathbf{S} \right\Vert_F = 1 \}$. 

In stark contrast with the direct application of  \texttt{palm4MSA} with $J=N$, an exact factorization is achieved. Indeed, the first step reached an exact factorization $\mathbf{A} = \mathbf{T}_{1} \mathbf{S}_{1}$ independently of the initialization. With the {\em default} initialization (Section~\ref{sec:defaultinit}), the residual $\mathbf{T}_{1}$ was observed to be still exactly factorizable. All steps ($\ell>1$) indeed also yielded exact factorizations $\mathbf{T}_{\ell-1} = \mathbf{T}_{\ell}\mathbf{S}_{\ell}$, provided the default initialization  was used at each step $\ell \in \{1,\dots,J-1\}$.

 Figure~\ref{fig:hierarchical_strategy} illustrates the result of the proposed hierarchical strategy  
in dimension $n=32$. The obtained factorization is exact and \emph{as good as the reference one} (cf Figure~\ref{fig:facthadamard}) in terms of complexity savings.  The running time of the factorization algorithm is less than a second. Factorization of the Hadamard matrix in dimension up to $n=1024$ showed identical behaviour, with running times $\mathcal{O}(n^{2})$ up to ten minutes.

\begin{figure}[htbp]
\centering
\includegraphics[width=\columnwidth]{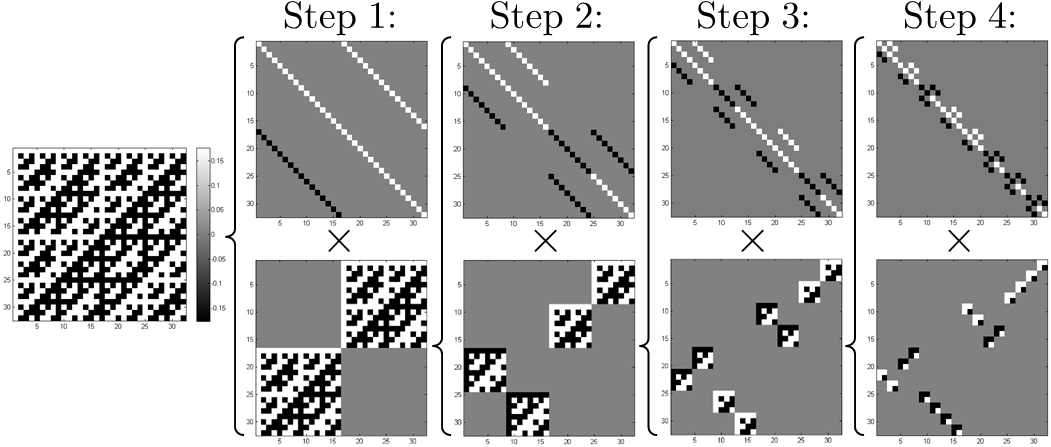}
\caption{Hierarchical factorization of the Hadamard matrix of size $32\times 32$. The matrix is iteratively factorized in 2 factors, until we have $J=5$ factors, each having $s=64$ non-zero entries.}
\vspace{-0.5cm}
\label{fig:hierarchical_strategy}
\end{figure}

\section{Accelerating inverse problems}
\label{sec:MEG}
A natural application of FA$\mu$STs is linear inverse problems, where a high-dimensional vector
$\boldsymbol{\gamma}$ needs to be retrieved from some observed data $\mathbf{y} \approx \mathbf{M} \boldsymbol{\gamma}$. As already evoked in Section~\ref{ssec:advantages}, iterative proximal algorithms can be expected to be significantly sped up if $\mathbf{M}$ is well approximated with a FA$\mu$ST of low relative complexity, for example using the proposed hierarchical factorization algorithm applied to $\mathbf{A} = \mathbf{M}$. 

In practice, one needs to specify the total number of factors $J$ and the constraint sets $\tilde{\mathcal{E}}_\ell$, $\mathcal{E}_\ell$.
A preliminary study on synthetic data was carried out in our technical report \cite{Lemagoarou2014}, showing that a flexible trade-off between relative complexity and adaptation to the input matrix can be achieved. Here we leverage the rule of thumb presented in Section~\ref{ssec:algdetails} to deepen the investigation of this question for a matrix $\mathbf{M}$ arising in a real-world biomedical linear inverse problem.


\subsection{Factorization compromise: MEG operator}
\label{ssec:mf}

In this experiment, we explore the use of FA$\mu$ST
in the context of functional brain imaging using magnetoencephalography (MEG)
and electroencephalography (EEG) signals.
Source imaging with MEG and EEG
delivers insights into the active brain at a millisecond time scale in a non-invasive way.
To achieve this, one needs to solve the bioelectromagnetic inverse problem. It is a high
dimensional ill-posed regression problem requiring proper regularization.
As it is natural to assume that a limited set of brain foci are active
during a cognitive task, sparse focal source configurations are commonly promoted
using convex sparse priors~\cite{Haufe2008,Gramfort2012}.
The bottleneck in the optimization algorithms are
the dot products with the forward matrix and its transpose.

\modif{The objective of this experiment is to observe achievable trade-offs between relative complexity and accuracy. To this end, we consider an MEG gain matrix $\mathbf{M} \in \mathbb{R}^{204\times 8193}$ ($m=204$ and $n=8193$), computed using the MNE software \cite{Gramfort2013} implementing a Boundary Element Method (BEM). In this setting, sensors and sources are not on a regular spatial grid. Note that in this configuration, one cannot easily rely on classical operator compression methods presented in sections~\ref{sssec:loclow}, that rely on the analytic expression of the kernel underlying $\mathbf{M}$, or on those presented in section~\ref{sssec:wavcomp}, that rely on some regularity of the input and output domains to define wavelets. Hence, in order to observe the complexity/accuracy trade-offs, $\mathbf{M}$ was factorized into $J$ sparse factors using the hierarchical factorization algorithm of Figure~\ref{fig:algo_hierarchical}.}

\subsubsection{Settings}
The rightmost factor $\mathbf{S}_1$ was of the size of $\mathbf{M}$, but with $k$-sparse columns, corresponding to the constraint set $\mathcal{E}_1 = \{\mathbf{S} \in \mathbb{R}^{204 \times 8193}, \left\Vert \mathbf{s}_i \right\Vert_0 \leq k,  \left\Vert \mathbf{S} \right\Vert_F = 1 \}$. All other factors $\mathbf{S}_j$, $j \in \{2,\ldots,J\}$ were set square, with global sparsity $s$, i.e. $\mathcal{E}_\ell = \{\mathbf{S} \in \mathbb{R}^{204 \times 204}, \left\Vert \mathbf{S} \right\Vert_0 \leq s, \left\Vert \mathbf{S} \right\Vert_F = 1 \}$. 

The ``residual'' at each step $\mathbf{T}_\ell$, $\ell \in \{1,\ldots,J-1\}$ was also set square, with global sparsity geometrically decreasing  with $\ell$, controlled by two parameters $\rho$ and $P$. This corresponds to the constraint sets\footnote{Compared to a preliminary version of this experiment  \cite{Lemagoarou2015a} where the residual was normalized columnwise at the first step, here it is normalized globally. This leads to slightly better results.} $\tilde{\mathcal{E}}_\ell = \{\mathbf{T} \in \mathbb{R}^{204 \times 204}, \left\Vert \mathbf{T} \right\Vert_0 \leq P\rho^{\ell-1}, \left\Vert \mathbf{T} \right\Vert_F = 1 \}$ for $\ell \in \{1,\ldots,J-1\}$. 
The controlling parameters are set to:
\begin{itemize}
\item Number of factors: $J \in \{2,\dots,10\}$.
\item Sparsity of the rightmost factor: $k\in\{5,10,15,20,25,30\}$.
\item \modif{Sparsity of the other factors: $ s \in \{2m,4m,8m\}$.}
\item Rate of decrease of the residual sparsity: $\rho = 0.8$. 
\end{itemize}
The parameter $P$ controlling the global sparsity in the residual was found to have only limited influence, and was set to $P = 1.4 \times m^{2}$. 
Other values for $\rho$ were tested, leading to slightly different but qualitatively similar complexity/accuracy trade-offs not shown here. The factorization setting is summarized in Figure~\ref{fig:setting}, where the sparsity of each factor is explicitly given. 

\begin{figure}[tbp]
    \centering
        \includegraphics[width=1\columnwidth]{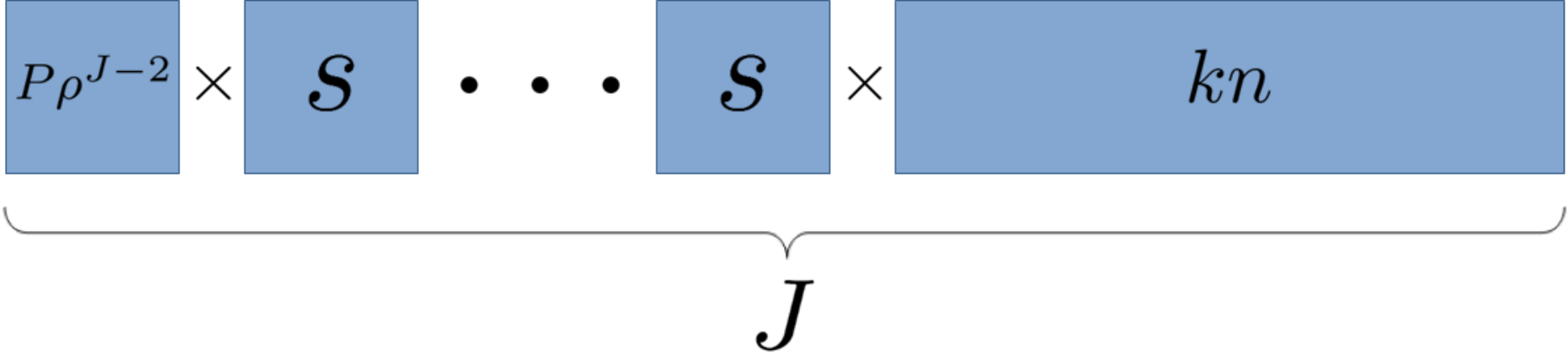}
    \caption{Factorization setting: each factor is represented with its total sparsity.}
    \vspace{-0.5cm}  
    \label{fig:setting}
\end{figure}

\begin{figure*}[htb]
    \centering
        \includegraphics[width=1\textwidth]{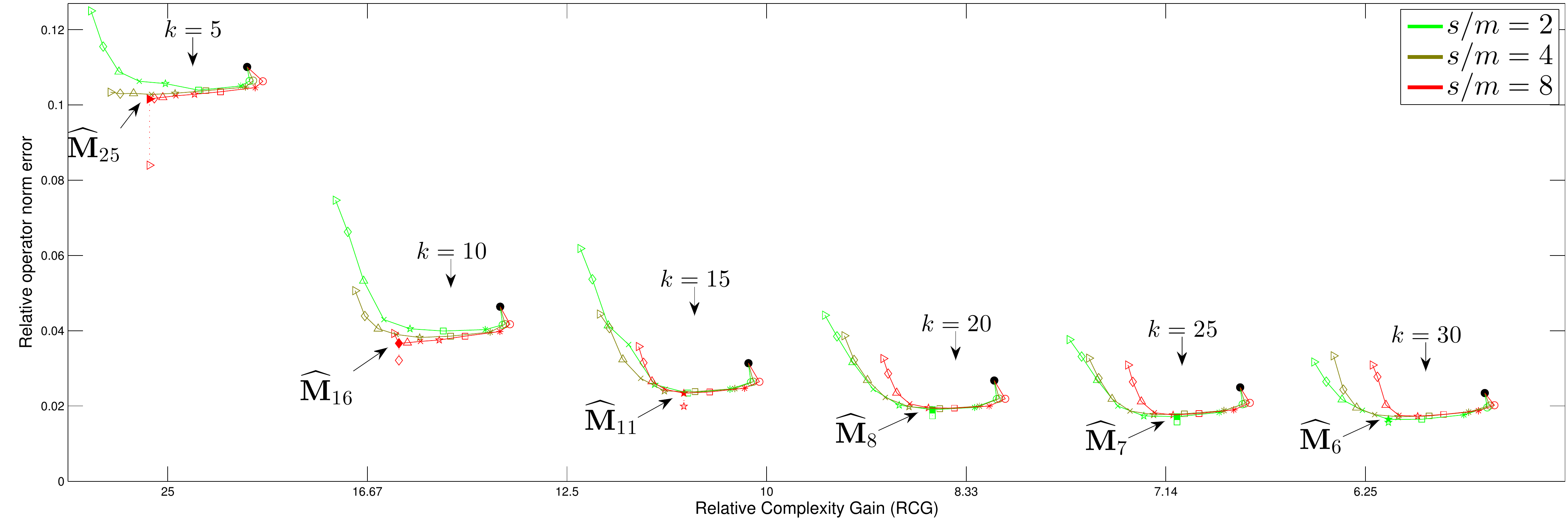}
        \vspace{-0.7cm}
    \caption{Results of the factorization of an $m \times n = 204\times 8193$ MEG matrix. The shape of the symbols denotes the number of factors $J$ ($\bullet: J=2$, $\fullmoon: J=3$, $\ast: J=4$, $\square: J=5$, \protect\raisebox{-0.5mm}{\FiveStarOpen} $: J=6$, $\times: J=7$, $\vartriangle: J=8$, $\diamond: J=9$, $\vartriangleright: J=10$), and the color the value of the parameter $s$. 
    } 
    \vspace{-0.5cm} 
    \label{fig:MEG}
\end{figure*}

\subsubsection{Results} 

Factorizations were computed for $127$ parameter settings. The computation time for each factorization was around $(J-1)\times 10$ minutes. Figure~\ref{fig:MEG} displays the trade-off between speed (the RCG measure~\eqref{RC}) and approximation error:
\begin{equation}
\text{RE} := \frac{\big\Vert\mathbf{M} - \lambda\prod_{j=1}^J\mathbf{S}_j\big\Vert_2}{\left\Vert\mathbf{M}\right\Vert_2} ,
\label{RE}
\end{equation} 
of each obtained FA$\mu$ST. We observe that: 
\begin{itemize}
\item The overall relative complexity of the obtained factorization is essentially controlled by the parameter $k$. This seems natural, since $k$ controls the sparsity of the rightmost factor which is way larger than the other ones. 
\item The  trade-off between complexity and approximation for a given $k$ is mainly driven by the number of factors $J$: higher values of $J$ lead to lower relative complexities, but a too large $J$ leads to a higher relative error.  Taking $J=2$ (black dots) never yields the best compromise, hence the relevance of truly \emph{multi-layer} sparse approximations. 
\item For a fixed $k$, one can distinguish nearby trade-off curves corresponding to different sparsity levels $s$ of the intermediate factors.  The parameter $s$ actually controls the horizontal spacing between two consecutive points on the same curve: a higher $s$ allows to take a higher $J$ without increasing the error, but in turn leads to a higher relative complexity for a given number $J$ of factors.   
 \end{itemize}
In summary, one can distinguish as expected a trade-off between relative complexity and approximation error. The configuration exhibiting the lowest relative error for each value of $k$ is highlighted on Figure~\ref{fig:MEG}, this gives $\widehat{\mathbf{M}}_{25}$, $\widehat{\mathbf{M}}_{16}$, $\widehat{\mathbf{M}}_{11}$, $\widehat{\mathbf{M}}_{8}$, $\widehat{\mathbf{M}}_{7}$, $\widehat{\mathbf{M}}_{6}$, where the subscript indicates the achieved RCG (rounded to the closest integer). For example, $\widehat{\mathbf{M}}_{6}$ can multiply vectors with $6$ times less flops than $\mathbf{M}$ (saving $84\%$ of computation), and $\widehat{\mathbf{M}}_{25}$ can multiply vectors with $25$ times less flops than $\mathbf{M}$ ($96\%$ savings). These six matrices are those appearing on Figure~\ref{fig:compSVD} to compare FA$\mu$STs to the truncated SVD. They will next be used to solve an inverse problem and compared to results obtained with $\mathbf{M}$. 

\begin{remark} Slightly smaller approximation errors can be obtained by imposing a global sparsity constraint to the rightmost factor, i.e., $\mathcal{E}_1 = \{\mathbf{S} \in \mathbb{R}^{204 \times 8193}, \left\Vert \mathbf{S} \right\Vert_0 \leq kn,  \left\Vert \mathbf{S} \right\Vert_F = 1 \}$. This is shown on Figure~\ref{fig:MEG} by the points linked by a dashed line to the six matrices  outlined above. However, such a global sparsity constraint also allows the appearance of null columns in the approximations of $\mathbf{M}$, which is undesirable for the application considered next.
\end{remark}

\subsection{Source localization experiment}
\label{ssec:sl}
We now assess the impact of replacing the MEG gain matrix $\mathbf{M} \in \mathbb{R}^{204 \times 8193}$ by a FA$\mu$ST approximation for brain source localization. For this synthetic experiment, two brain sources chosen located uniformly at random were activated with gaussian random weights, giving a $2$-sparse vector $\boldsymbol{\gamma} \in \mathbb{R}^{8193}$, whose support encodes the localization of the sources. Observing $\mathbf{y} := \mathbf{M}\boldsymbol{\gamma}$, the objective is to estimate (the support of) $\boldsymbol{\gamma}$. The experiment then amounts to solving the inverse problem to get $\hat{\boldsymbol{\gamma}}$ from the measurements $\mathbf{y} = \mathbf{M}\boldsymbol{\gamma} \in \mathbb{R}^{204}$, using either $\mathbf{M}$ or a FA$\mu$ST $\widehat{\mathbf{M}}$ during the recovery process. 

Three recovery methods were tested:  Orthogonal Matching Pursuit (OMP) \cite{Tropp2007} (choosing $2$ atoms), $\ell_1$-regularized least squares ($\text{l}_1\text{ls}$) \cite{Kim2007} and Iterative Hard Thresholding (IHT) \cite{Blumensath2008}).  They yielded qualitatively similar results, and for the sake of conciseness we present here only the results for OMP.


The matrices used for the recovery are
the actual matrix $\mathbf{M}$ and
its FA$\mu$ST approximations $\widehat{\mathbf{M}}_{25}$, $\widehat{\mathbf{M}}_{16}$, $\widehat{\mathbf{M}}_{11}$, $\widehat{\mathbf{M}}_{8}$, $\widehat{\mathbf{M}}_{7}$ and $\widehat{\mathbf{M}}_{6}$. 
The expected computational gain of using a FA$\mu$ST instead of $\mathbf{M}$ is of the order of RCG, since the computational cost of OMP is dominated by products with $\mathbf{M}^T$. 

\begin{figure*}[htbp]
    \centering
        \includegraphics[width=\textwidth]{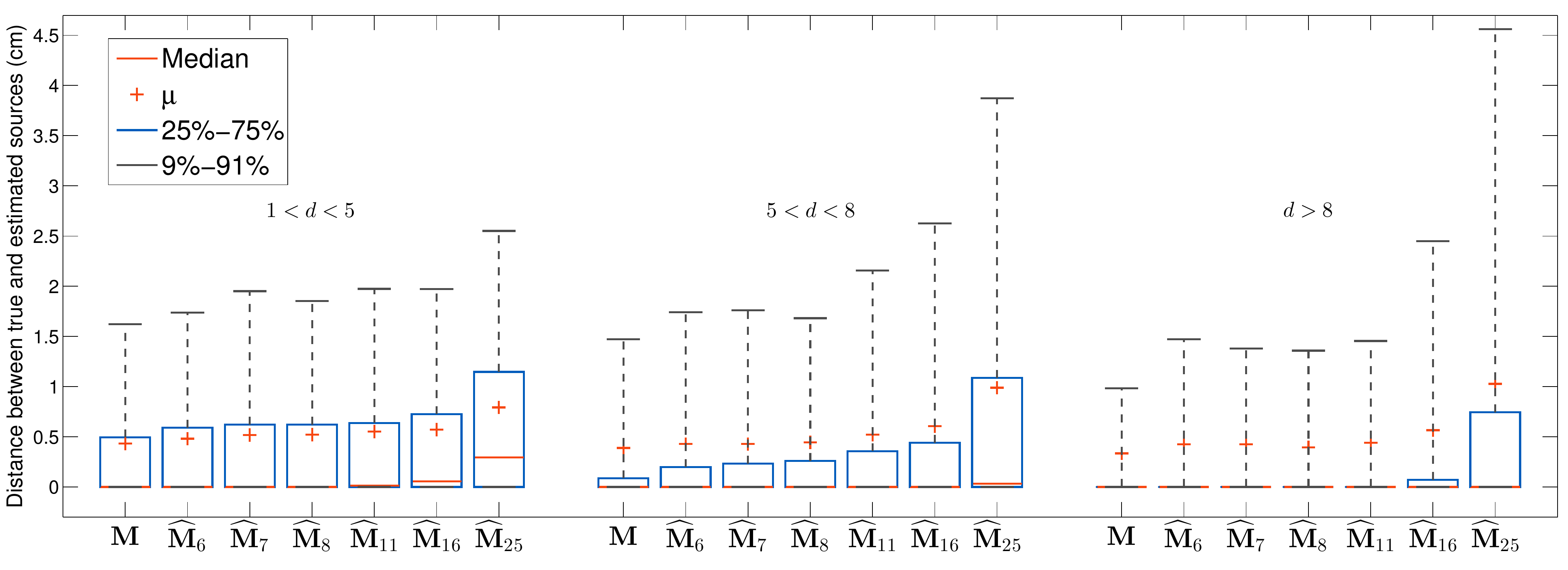}
        \vspace{-0.7cm}
\caption{Localization performance (distance between actual and retrieved source) obtained with various matrices, for different distances between actual sources.}
\vspace{-0.5cm}
    \label{fig:BSL}
\end{figure*}

Three configurations were considered when generating the location of the sources, in term of distance $d$ (in centimeters) between the sources. For each configuration,  $500$ vectors $\mathbf{y} = \mathbf{M} \boldsymbol{\gamma}$ were generated, and OMP was run using each matrix. 
The distance between each actual source and the closest retrieved source was measured. Figure~\ref{fig:BSL} displays the statistics of this distance for all scenarios:
\begin{itemize}
\item As expected, localization is better when the sources are more separated, independently of the choice of matrix.
\item Most importantly, the performance is almost as good when using FA$\mu$STs $\widehat{\mathbf{M}}_{6}$, $\widehat{\mathbf{M}}_{7}$, $\widehat{\mathbf{M}}_{8}$ and $\widehat{\mathbf{M}}_{11}$ than when using the actual matrix $\mathbf{M}$, although the FA$\mu$STs are way more computationally efficient ($6$ to $11$ times less computations). For example, in the case of well separated sources ($d>8$), the FA$\mu$STs allow to retrieve exactly the sought sources more than $75\%$ of the time, which is almost as good as when using the actual matrix $\mathbf{M}$. 
\item The performance with the two other FA$\mu$STs $\widehat{\mathbf{M}}_{16}$ and $\widehat{\mathbf{M}}_{25}$ is a bit poorer, but they are even more computationally efficient matrix ($16$ and $25$ times less computations).  For example, in the case of well separated sources ($d>8$), they allow to retrieve exactly the sought sources more than $50\%$ of the time. 
\end{itemize}
These observations confirm it is possible to slighlty trade-off localization performance for substantial computational gains, and that FA$\mu$STs can be used to speed up inverse problems without a large precision loss.

\section{Learning fast dictionaries}
\label{sec:diclearn}

Multi-layer sparse approximations of operators are particularly suited for choosing efficient dictionaries for data processing tasks. 
\subsection{Analytic vs. learned dictionaries}

Classically, there are two paths to choose a dictionary for sparse signal representations  \cite{Rubinstein2010}. 

Historically, the only way to come up with a dictionary was to analyze mathematically the data and derive a ``simple"  formula to construct the dictionary. 
Dictionaries designed this way are called \emph{analytic dictionaries}  \cite{Rubinstein2010} (e.g., associated to Fourier, wavelets and Hadamard transforms). Due to the relative simplicity of analytic dictionaries, they usually have a known sparse form such as the Fast Fourier Transform (FFT) \cite{CooleyTukey1965} or the Discrete Wavelet Transform (DWT) \cite{Mallat1989}.

On the other hand, the development of modern computers allowed the surfacing of automatic methods that learn a dictionary directly from the data \cite{Engan1999,Aharon2006,Mairal2010}. 
Given some raw data $\mathbf{Y} \in \mathbb{R}^{m \times L}$, the principle of dictionary learning is to approximate $\mathbf{Y}$ by the product of a dictionary $\mathbf{D} \in \mathbb{R}^{m \times n}$ and a coefficient matrix $\boldsymbol{\Gamma} \in \mathbb{R}^{n \times L}$ with sparse columns:
\begin{equation*}
\mathbf{Y} \approx \mathbf{D}\boldsymbol{\Gamma}.
\end{equation*}
Such \emph{learned dictionaries} are usually well adapted to the data at hand. However, being in general dense matrices with no apparent structure, they do not lead to fast algorithms and are costly to store. 
 We typically have $L \gg \max(m,n)$ (for sample complexity reasons), which implies to be very careful about the computational efficiency of learning in that case.
\subsection{The best of both worlds}
\label{ssec:diclearnalgo}
Can one design dictionaries as well adapted to the data as learned dictionaries, while as fast to manipulate and as cheap to store as analytic ones? This question has begun to be explored recently \cite{Rubinstein2010a,Chabiron2013}, and actually amounts to learning of dictionary that are FA$\mu$STs. More precisely, given $\mathbf{Y}$, the objective is to learn a dictionary being a FA$\mu$ST (as in \eqref{eq:spop}):
\begin{equation*}
\mathbf{D} = \prod_{j=1}^J \mathbf{S}_j.
\end{equation*}

This can be done by inserting a dictionary factorization step into the traditional structure of dictionary learning algorithms \cite{Rubinstein2010}, as illustrated on Figure~\ref{fig:DLstruct}.
\begin{figure}[b]
\centering
\includegraphics[width=\columnwidth]{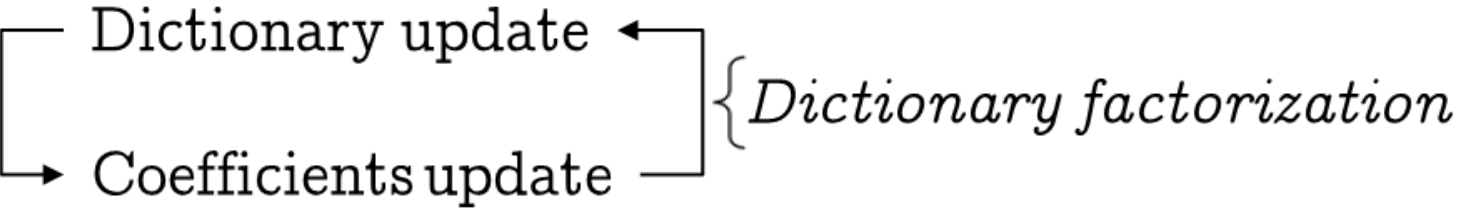}
\caption{Classical dictionary learning algorithm structure (in roman). In italic, the added dictionary factorization step, specific to the approach presented here.}
\vspace{-0.1cm}
\label{fig:DLstruct}
\end{figure}
A consequence is that the coefficients update can be sped up by exploiting the  FA$\mu$ST structure of the dictionary. The approach described below uses a batch method for dictionary update, but the approach is a priori also compatible with stochastic gradient descent in the dictionary update for even more efficiency.

In practice we propose to slightly modify the hierarchical factorization algorithm of Figure~\ref{fig:algo_hierarchical}. The idea is to take a dictionary $\mathbf{D}$ learned on some training data $\mathbf{Y}$ (with any classical dictionary learning method, such as K-SVD \cite{Aharon2006}) and to hierarchically factorize it, taking into account and jointly updating the coefficients matrix $\boldsymbol{\Gamma}$. 

The resulting hierarchical factorization algorithm adapted to dictionary learning is given in Figure~\ref{fig:algo_hierarchical_DL}. The only differences with the hierarchical factorization algorithm given previously is that the coefficients matrix is taken into account (but kept fixed) in the global optimization step, and that an update of the coefficients by sparse coding is added after this global optimization step\modif{, in order to keep the error with respect to the data matrix low}. This sparse coding step can actually be done by any algorithm (OMP, IHT, ISTA...), denoted by the general \texttt{sparseCoding} algorithm in Figure~\ref{fig:algo_hierarchical_DL}. 


\begin{remark}
As noted in section~\ref{sssec:impdetails}, the dictionary factorization is presented here starting from the right. It could as well be performed starting from the left.
\end{remark}

\begin{figure}[htbp]
Hierarchical factorization for dictionary learning  \\ 
\begin{boxedalgorithmic}[1]
\REQUIRE{Data matrix $\mathbf{Y}$; Initial dictionary $\mathbf{D}$ and coefficients $\boldsymbol{\Gamma}$ (e.g., from K-SVD); desired number of factors $J$; constraint sets $\tilde{\mathcal{E}}_\ell$ and $\mathcal{E}_\ell$, $\ell \in \{1\ldots J-1\}$.}\STATE $\mathbf{T}_0 \leftarrow \mathbf{D}$, $\boldsymbol{\Gamma}_0 \leftarrow \boldsymbol{\Gamma}$
\FOR{$\ell=1$ to $J-1$} 
\STATE {\em Dictionary factorization}: 
factorize the residual $\mathbf{T}_{\ell-1}$ into $2$ factors

 $\lambda'$,$\{\mathbf{F}_2,\mathbf{F}_1\}$ = \texttt{palm4MSA}($\mathbf{T}_{\ell-1}$, $2$, $\{\tilde{\mathcal{E}}_\ell,\mathcal{E}_\ell\}$, \texttt{init=default})\\
$\mathbf{T}_\ell \leftarrow \lambda'\mathbf{F}_2$ and  $\mathbf{S}_\ell \leftarrow \mathbf{F}_1$
\STATE {\em Dictionary update}: global optimization

 $\lambda$,$\big\{\mathbf{T}_\ell,\{\mathbf{S}_j\}_{j=1}^{\ell},\boldsymbol{\Gamma}_{\ell-1}\big\}$ = \texttt{palm4MSA}($\mathbf{Y}$, $\ell+2$, $\big\{\tilde{\mathcal{E}}_\ell,\{\mathcal{E}_j\}_{j=1}^{\ell},\{\boldsymbol{\Gamma}_{\ell-1}\}\big\}$, \texttt{init=current}) 
 \STATE {\em Coefficients update}:

 $\boldsymbol{\Gamma}_{\ell}$ = \texttt{sparseCoding}($\mathbf{Y}$, $\mathbf{T}_{\ell}\prod_{j=1}^\ell\mathbf{S}_j$) 
\ENDFOR
\STATE  $\mathbf{S}_J \leftarrow \mathbf{T}_{J-1}$
\ENSURE The estimated factorization:  $\lambda$,$\{\mathbf{S}_j\}_{j=1}^{J}$.
\end{boxedalgorithmic}
\caption{Hierarchical factorization algorithm for dictionary learning.}
\vspace{-0.5cm}
\label{fig:algo_hierarchical_DL}
\end{figure}

\subsection{Image denoising experiment}
\label{sssec:imgdenois}
In order to illustrate the advantages of FA$\mu$ST dictionaries over classical dense ones, an image denoising experiment is performed here.
The experimental scenario for this task follows a simplified dictionary based image denoising workflow. First, $L = 10000$ 
patches $\mathbf{y}_{i}$ of size $8 \times 8$ (dimension $m=64$) are randomly picked from an input $512 \times 512$ noisy image (with  various noise levels, of variance $\sigma \in \{10,15,20,30,50\}$), and a dictionary is learned on these patches. Then the learned dictionary is used to denoise the entire input image by computing the sparse representation of all its patches in the dictionary using OMP, allowing each patch to use $5$ dictionary atoms. The image is reconstructed by averaging the overlapping patches. 

\noindent{\bf Experimental settings.} Several configurations were tested. The number of atoms $n$ 
was taken in $\{128,256,512\}$. Inspired by usual fast transforms, a number of factors $J$ close to the logarithm of the signal dimension $m=64$ 
was chosen, here $J = 4$. The sizes of the factors were: $\mathbf{S}_J,\ldots,\mathbf{S}_2 \in \mathbb{R}^{m \times m}$, 
$\mathbf{S}_1 \in \mathbb{R}^{m \times n}$, and $\boldsymbol{\Gamma} \in \mathbb{R}^{n \times L}$. The algorithm of Figure~\ref{fig:algo_hierarchical_DL} was used, with the initial dictionary learning being done by K-SVD \cite{Aharon2006} and \texttt{sparseCoding} being OMP, allowing each patch to use $5$ dictionary atoms. Regarding the constraint sets, we took them exactly like in section \ref{ssec:mf}, taking 
$s/m \in \{2,3,6,12\}$, $\rho \in \{0.4,0.5,0.7,0.9\}$, $P = 64^2$ and $k=s/m$. For each dictionary size $n$, this amounts to a total of sixteen different configurations leading to different relative complexity values. The stopping criterion for $\texttt{palm4MSA}$ was a number of iterations $N_{i}=50$. Note that the usage of OMP here and at the denoising stage is a bit abusive since the dictionary does not have unit-norm columns (the factors are normalized instead), but it was used anyway, resulting in a sort of weighted OMP, where some atoms have more weight than others.

\noindent{\bf Baselines.} 
The proposed method was compared to {\em Dense Dictionary Learning} (DDL). K-SVD is used here to perform DDL, but other algorithms have been tested (such as online dictionary learning \cite{Mairal2010}), leading to similar qualitative results. In order to assess the generalization performance and to be as close as possible to the matrix factorization framework studied theoretically in \cite{Gribonval2015}, DDL is performed following the same denoising workflow than our method (dictionary learned on $10000$ noisy patches used to denoise the whole image, allowing five atoms per patch). The implementation described in \cite{Rubinstein2008} was used, running $50$ iterations (empirically sufficient to ensure convergence). 

Note that better denoising performance can be obtained by inserting dictionary learning into a more sophisticated denoising workflows, see e.g. \cite{Elad2006}. State of the art denoising algorithms indeed often rely on clever averaging procedure called ``aggregation''. Our purpose here is primarily to illustrate the potential of the proposed FA$\mu$ST structure for denoising. While such workflows are fully compatible with the FA$\mu$ST structure, we leave the implementation and careful benchmarking of the resulting denoising systems to future work. 

\modif{ As a last baseline, we used the above denoising scheme with an overcomplete DCT of $128$, $256$ or $512$ atoms. }

\noindent{\bf Results.} 
The experiment is done on the standard image database taken from \cite{imageprocessingplace} (12 standard grey $512\times 512$ images). In Figure~\ref{fig:denoising} are shown the results for three images: the one for which FA$\mu$ST dictionaries perform worst (``Mandrill''), the one for which they perform best (``WomanDarkHair'') and the typical behaviour (``Pirate''). Several comments are in order:
\begin{itemize}
\item First of all, it is clear that with the considered simple denoising workflow, FA$\mu$ST dictionaries perform better than DDL at strong noise levels, namely $\sigma=30$ and $\sigma=50$. This can be explained by the fact that when training patches are very noisy, DDL is prone to overfitting (learning the noise), whereas the structure of FA$\mu$STs seems to prevent it. On the other hand, for low noise levels, we pay the lack of adaptivity of FA$\mu$STs  compared to DDL. Indeed, especially for very textured images (``Mandrill'' typically), the dictionary must be very flexible in order to fit such complex training patches, so DDL performs better. \modif{FA$\mu$ST dictionaries also perform better than DCT dictionaries at high noise levels.}

\item Second, it seems that sparser FA$\mu$STs (with fewer parameters) perform better than denser ones for high noise levels. This can be explained by the fact that they are less prone to overfitting because of their lower number of parameters, implying fewer degrees of freedom. However, this is not true for low noise levels or with too few parameters, since in that case the loss of adaptivity with respect to the training patches is too important.


\end{itemize}

\begin{figure}[htbp]
\centering
\includegraphics[width=\columnwidth]{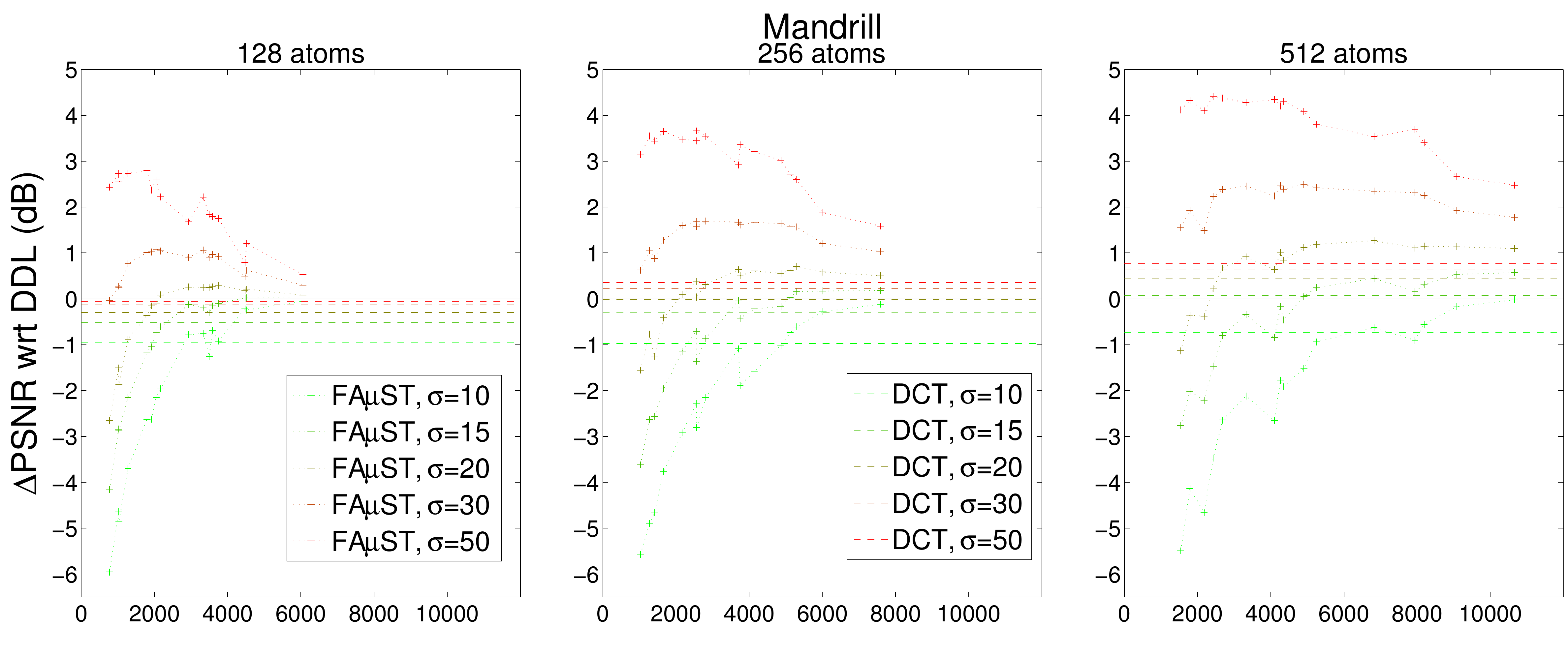}\vspace{-2mm}
\includegraphics[width=\columnwidth]{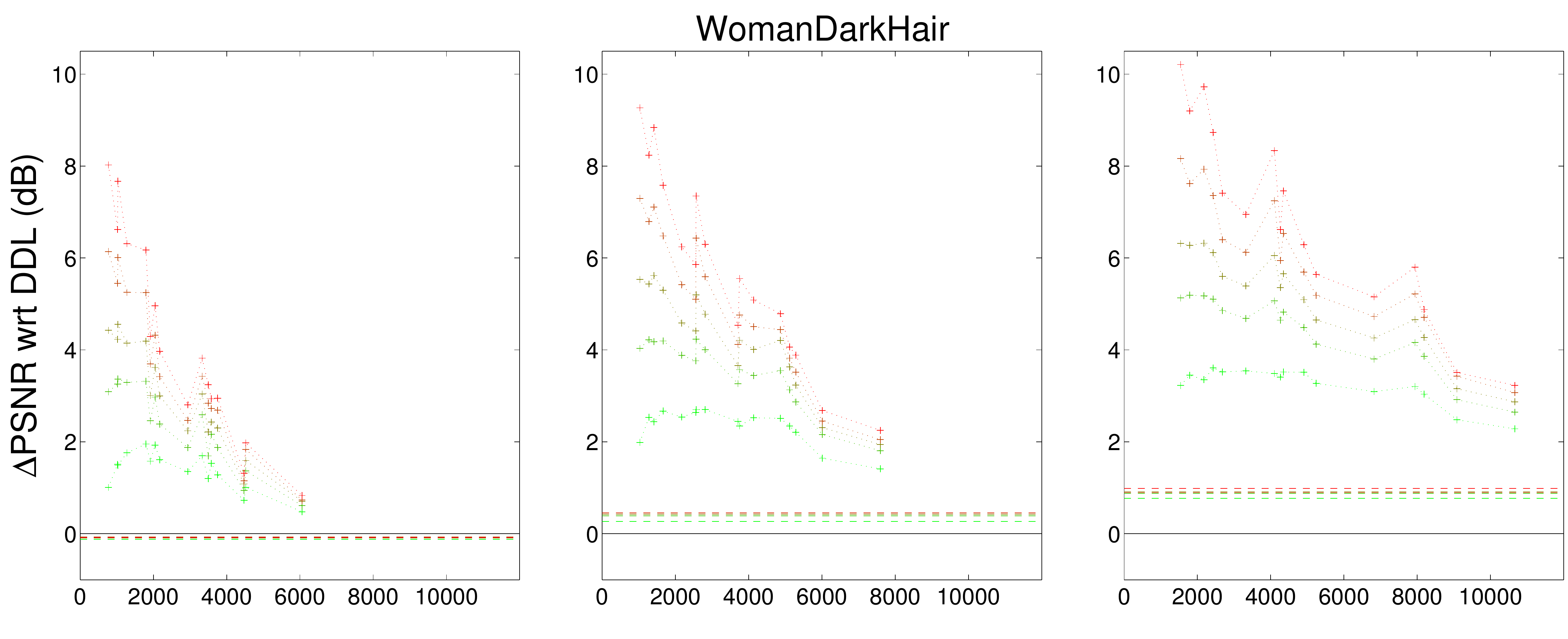}\vspace{-2mm}
\includegraphics[width=\columnwidth]{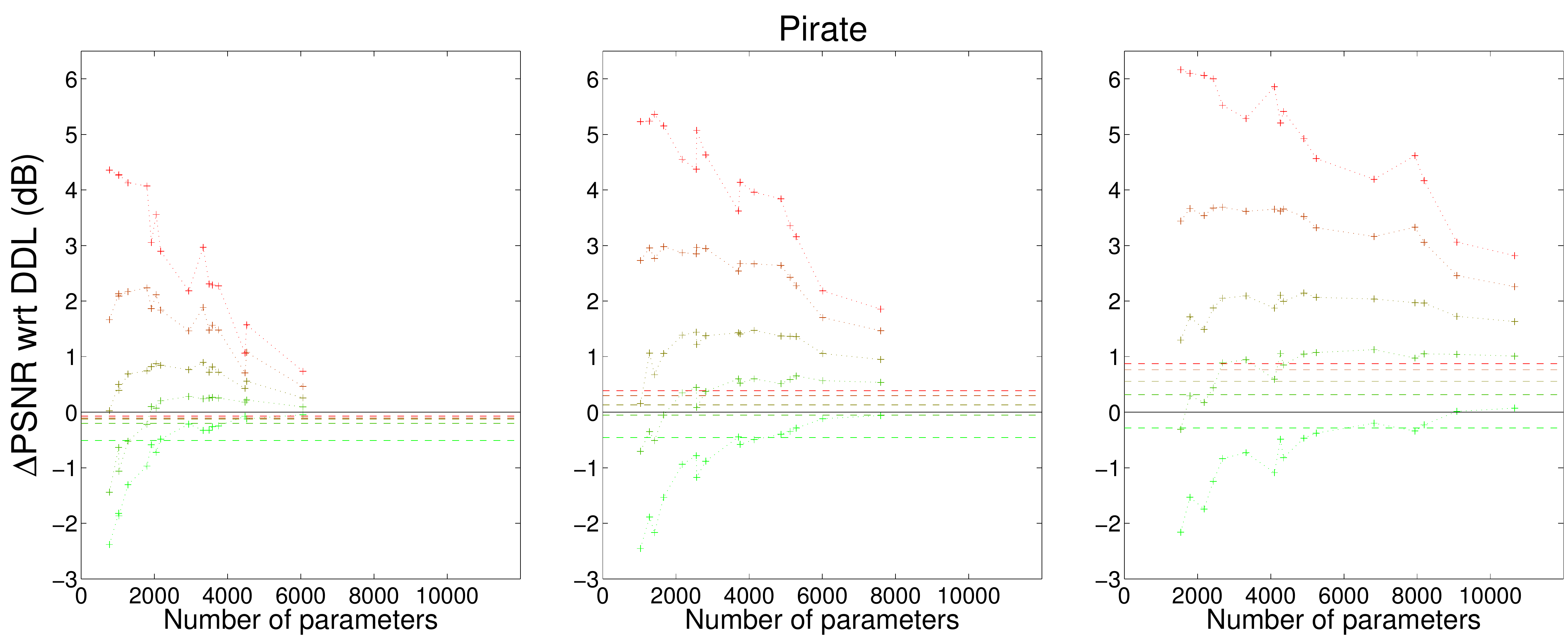}
\vspace{-0.7cm}
\caption{Denoising results. The relative performance of FA$\mu$ST dictionaries compared to DDL\modif{, and the DCT compared to DDL}
is given for several noise levels $\sigma$, for various values of $s_{\text{tot}}$ on the horizontal axis.}
\vspace{-0.6cm}
\label{fig:denoising}
\end{figure}

\subsection{Sample complexity of FA$\mu$STs}
\label{sssec:sparsedicadv}

The good performance of FA$\mu$ST dictionaries compared to dense ones observed above may be surprising, since the more constrained structure of such dictionaries (compared to dense dictionaries) may bar them from providing good approximations of the considered patches. A possible element of explanation stems from the notion of sample complexity: as evoked in section~\ref{ssec:advantages}, the statistical significance of learned multi-layer sparse operators is expected to be improved compared to that of dense operators, thanks to a reduced sample complexity. 

In the context of dictionary learning, the sample complexity indicates how many training samples $L$ should be taken in order for the empirical risk to be (with high probability) uniformly close to its expectation \cite{Vainsencher2011,Maurer2010}. In \cite{Gribonval2015}, a general bound on the deviation between the empirical risk and its expectation is provided, which is proportional to the covering dimension of the dictionary class. 

For dense dictionaries the covering dimension is known to be $\mathcal{O}(mn)$ \cite{Vainsencher2011,Maurer2010,Gribonval2015}. For FA$\mu$ST dictionaries we establish in Appendix~\ref{app:covnum} the following theorem.
\begin{theorem}
\label{th:covnum}
For multi-layer sparse operators, the covering dimension is bounded by $s_{tot}$.
\end{theorem}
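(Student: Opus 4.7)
The plan is to build an explicit $\epsilon$-net for the multi-layer sparse dictionary class
\[
\mathcal{D} = \Big\{\lambda\prod_{j=1}^J \mathbf{S}_j : \lambda \in [-\Lambda,\Lambda],\ \mathbf{S}_j \in \mathcal{E}_j\Big\},
\]
where each $\mathcal{E}_j$ is the intersection of the unit Frobenius sphere in $\mathbb{R}^{a_{j+1}\times a_j}$ with the set of matrices having at most $s_j$ non-zero entries, and then read off the covering dimension from its cardinality. Throughout I will use the fact that the admissible set $\mathcal{E}_j$ decomposes as a finite union (indexed by supports of size $s_j$) of unit spheres of linear subspaces of dimension $s_j$.

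First I would prove the key Lipschitz-type inequality that propagates perturbations across the product. Using the telescoping identity
\[
\prod_{j=1}^J \mathbf{S}_j \;-\; \prod_{j=1}^J \mathbf{S}'_j \;=\; \sum_{j=1}^{J} \Bigl(\prod_{k>j}\mathbf{S}'_k\Bigr)(\mathbf{S}_j - \mathbf{S}'_j)\Bigl(\prod_{k<j}\mathbf{S}_k\Bigr),
\]
the sub-multiplicativity $\|\mathbf{A}\mathbf{B}\|_F \leq \|\mathbf{A}\|_2\|\mathbf{B}\|_F \leq \|\mathbf{A}\|_F\|\mathbf{B}\|_F$, and the unit Frobenius normalization of every factor, one obtains
\[
\Bigl\|\prod_j \mathbf{S}_j - \prod_j \mathbf{S}'_j\Bigr\|_F \;\leq\; \sum_{j=1}^J \|\mathbf{S}_j - \mathbf{S}'_j\|_F.
\]
Combined with the trivial bound $\|\prod_j \mathbf{S}_j\|_F \leq 1$, this yields a Lipschitz bound for the map $(\lambda,\mathbf{S}_1,\ldots,\mathbf{S}_J)\mapsto \lambda\prod_j\mathbf{S}_j$ that is independent of $J$ up to a factor $J$.

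Next I would cover each $\mathcal{E}_j$. For a fixed support $\mathcal{I}_j$ of size $s_j$, the intersection of $\mathcal{E}_j$ with that support is the unit sphere of an $s_j$-dimensional subspace, which admits an $\delta$-net of cardinality at most $(3/\delta)^{s_j}$ by the standard volumetric argument. A union bound over the $\binom{a_j a_{j+1}}{s_j} \leq (e\, a_j a_{j+1}/s_j)^{s_j}$ supports then gives an $\delta$-net $\mathcal{N}_j$ of $\mathcal{E}_j$ with $\log|\mathcal{N}_j| \leq s_j\bigl[\log(3/\delta)+\log(e\, a_j a_{j+1}/s_j)\bigr]$. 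Similarly, a $\delta$-net of the interval $[-\Lambda,\Lambda]$ has cardinality $\leq \lceil 2\Lambda/\delta\rceil$.

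Finally I would combine the pieces. Setting $\delta = \epsilon/(J+1)$ and taking $\mathcal{N} = \{\lambda \prod_j \mathbf{S}_j : \lambda \in \mathcal{N}_\lambda,\ \mathbf{S}_j \in \mathcal{N}_j\}$, the telescoping Lipschitz bound guarantees that $\mathcal{N}$ is an $\epsilon$-net of $\mathcal{D}$ in Frobenius norm, with
\[
\log|\mathcal{N}| \;\leq\; \sum_{j=1}^J s_j\log\tfrac{3(J+1)}{\epsilon} + \sum_{j=1}^J s_j\log\tfrac{e\,a_j a_{j+1}}{s_j} + \log\tfrac{2\Lambda(J+1)}{\epsilon}.
\]
The leading $\log(1/\epsilon)$ coefficient equals $s_{tot}+1$, which for the notion of covering dimension used in \cite{Gribonval2015} (the coefficient of $\log(1/\epsilon)$ in $\log N(\epsilon,\mathcal{D})$, absorbing additive constants) yields the claimed bound $s_{tot}$; the remaining terms are additive constants depending only on the dimensions and the sparsity budget. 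The main obstacle is actually the first step, namely ensuring that the Lipschitz constant of the multi-linear map does not blow up with $J$ or with the ambient dimensions: this is why the unit Frobenius normalization in the definition of $\mathcal{E}_j$ is crucial, and any alternative normalization would require revisiting the telescoping bound. Minor care is also required to handle alternative sparsity constraints (row/column sparsity, prescribed support, etc.) covered by the propositions in Appendix \ref{app:projop}: in each case the same argument applies because the admissible set is still a finite union of spheres of subspaces of dimension at most $s_j$.
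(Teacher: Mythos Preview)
Your proposal is correct and follows essentially the same route as the paper: cover each factor set $\mathcal{E}_j$ as a union of $\binom{a_j a_{j+1}}{s_j}$ unit spheres of dimension $s_j$, show the product map is a contraction with respect to the sum metric $\sum_j\|\mathbf{S}_j-\mathbf{S}'_j\|_F$ (the paper states this ``by induction'' where you use the telescoping identity), and read off the $\log(1/\epsilon)$ coefficient. The one wrinkle is that you included the scalar $\lambda$ in the parametrization, which gives a leading coefficient $s_{tot}+1$ that you then try to dismiss; this does not actually ``absorb'' into additive constants under the definition $d(\mathcal{A})=\limsup_{\epsilon\to0}\log\mathcal{N}(\mathcal{A},\epsilon)/\log(1/\epsilon)$. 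The paper simply omits $\lambda$ from the set $\mathcal{D}_{\text{spfac}}$ being covered and thereby obtains exactly $s_{tot}$; you should do the same.
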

A consequence is that for the same number of training samples $L$, a better generalization performance is expected from FA$\mu$ST dictionaries compared to dense ones, the gain being of the order of RCG. 
This fact is likely to explain the empirical success of FA$\mu$STs compared to dense dictionaries observed in section~\ref{sssec:imgdenois} at low SNR. Of course, when $s_{tot}$ becomes to small, the limited approximation capacity of FA$\mu$ST dictionaries imposes a trade-off between approximation and generalization.

\section{Conclusion and future work}
In this paper, a novel multi-layer matrix factorization framework was introduced, which allows to approximate a given linear operator by the composition of several ones. The underlying factorization algorithm stems on recent advances in non-convex optimization and has convergence guarantees. The proposed approach consists in hierarchically factorizing the input matrix in the hope of attaining better local minima, as is done for example in deep learning. The factorization algorithm that is used is pretty general and is able to take into account various constraints. One practical constraint of interest is sparsity (of various forms), which has several interesting properties. Indeed, multi-layer sparsely factorized linear operators have several advantages over classical dense ones, such as an increased speed of manipulation, a lighter storage footpring, and a higher statistical significance when estimated on training data. 

The interest and versatility of the proposed factorization approach was demonstrated with various experiments, including a source localization one where the proposed method performs well with a greatly reduced computational cost compared to previous techniques. We performed also image denoising experiments demonstrating that the proposed method has better generalization performance than dense dictionary learning with an impact at low SNR.

In the future, several developments are expected. On the theoretical side, we envision bounds on the trade-off between  approximation quality and relative complexity. On the experimental side, new applications for FA$\mu$STs are to be explored. For example, signal processing on graphs is a relatively new discipline where computationally efficient operators can be envisioned using learned FA$\mu$ST dictionaries, or a FA$\mu$ST approximation of graph Fourier transforms. \modif{Moreover, the factorization cost being quite high, one could envision ways to approximate matrices by FA$\mu$STs without accessing the whole matrix, in order to reduce this cost. For example, one could imagine to have observations of the form $(\mathbf{x}_i,\mathbf{y}_i = \mathbf{Ax_i})$ and try to minimize a data fitting term of the form $\sum_{i=1}^n \big\Vert \mathbf{y}_i - \prod_{j=1}^J\mathbf{S}_j\mathbf{x}_i \big\Vert_2^2$.}

\appendices
\section{Projection operators}
\label{app:projop}
In this appendix are given the projection operators onto several constraint sets of interest for practical applications.
\subsection{General sparsity constraint}
Sparsity is the most obvious constraint to put on the factors for operator sparse approximations. Consider first  the following general sparsity constraint set:
\begin{equation*}
\mathcal{E} := \{\mathbf{S} \in \mathbb{R}^{p \times q} : \left\Vert \mathbf{S}_{\mathcal{H}_i} \right\Vert_0  \leq s_i \forall i \in \{1,\dots,K\}, \left\Vert \mathbf{S} \right\Vert_F  = 1 \},
\end{equation*}
where $\{\mathcal{H}_1,\dots,\mathcal{H}_K\}$ forms a partition of the index set, $s_i \in \mathbb{N}$, $\forall i \in \{1,\dots,K\}$, and $\mathbf{S}_{\mathcal{T}}$ is the matrix whose entries match those of $\mathbf{S}$ on $\mathcal{T}$ and are set to zero elsewhere.
Given some matrix $\mathbf{U} \in \mathbb{R}^{p \times q}$, we wish to compute its projection onto the set $\mathcal{E}$: 
$
P_{\mathcal{E}}(\mathbf{U})\in \underset{\mathbf{S}}{\arg\min}\{ \left\Vert \mathbf{S} - \mathbf{U} \right\Vert_F^2 : \mathbf{S} \in \mathcal{E}\}.
$

\begin{proposition}\label{prop:proxsparsity}
Projection operator formula.

\begin{equation*}
P_{\mathcal{E}}(\mathbf{U}) = \frac{\mathbf{U}_{\mathcal{I}}}{\left\Vert\mathbf{U}_{\mathcal{I}}\right\Vert_F}
\end{equation*}
where $\mathcal{I}$ is the index set corresponding to the union of the $s_i$  entries of $\mathbf{U}_{\mathcal{H}_i}$ with largest absolute value, $\forall i \in \{1,\dots,K\}$. 
\end{proposition}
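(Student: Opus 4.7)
The plan is to decouple the projection problem into (i) choice of support and (ii) choice of values on that support, and to show that both sub-problems admit closed-form solutions.

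First, I would restrict attention to a fixed admissible support. Let $\mathcal{S}$ be any index set satisfying $|\mathcal{S}\cap\mathcal{H}_i|\leq s_i$ for all $i\in\{1,\dots,K\}$. Over the feasible subset $\mathcal{E}_{\mathcal{S}}:=\{\mathbf{S}\in\mathbb{R}^{p\times q}:\operatorname{supp}(\mathbf{S})\subseteq\mathcal{S},\,\|\mathbf{S}\|_F=1\}$, I expand
\[
\|\mathbf{S}-\mathbf{U}\|_F^2 \;=\; \|\mathbf{S}\|_F^2-2\langle\mathbf{S},\mathbf{U}\rangle+\|\mathbf{U}\|_F^2 \;=\; 1-2\langle\mathbf{S},\mathbf{U}_{\mathcal{S}}\rangle+\|\mathbf{U}\|_F^2,
\]
where the last equality uses that $\mathbf{S}$ vanishes outside $\mathcal{S}$. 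By Cauchy--Schwarz, $\langle\mathbf{S},\mathbf{U}_{\mathcal{S}}\rangle\leq\|\mathbf{S}\|_F\|\mathbf{U}_{\mathcal{S}}\|_F=\|\mathbf{U}_{\mathcal{S}}\|_F$, with equality (when $\mathbf{U}_{\mathcal{S}}\neq\mathbf{0}$) if and only if $\mathbf{S}=\mathbf{U}_{\mathcal{S}}/\|\mathbf{U}_{\mathcal{S}}\|_F$. Hence the minimum of $\|\mathbf{S}-\mathbf{U}\|_F^2$ over $\mathcal{E}_{\mathcal{S}}$ equals $1-2\|\mathbf{U}_{\mathcal{S}}\|_F+\|\mathbf{U}\|_F^2$, and is attained at the normalized restriction of $\mathbf{U}$ to $\mathcal{S}$.

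Second, I would optimize over the support. Because the constant $1+\|\mathbf{U}\|_F^2$ does not depend on $\mathcal{S}$, minimizing over admissible supports amounts to maximizing $\|\mathbf{U}_{\mathcal{S}}\|_F^2=\sum_i \|\mathbf{U}_{\mathcal{S}\cap\mathcal{H}_i}\|_F^2$. Since the $\mathcal{H}_i$ form a partition, the constraints $|\mathcal{S}\cap\mathcal{H}_i|\leq s_i$ decouple across the blocks, so the maximum is attained by choosing, independently for each $i$, the $s_i$ indices in $\mathcal{H}_i$ corresponding to the largest values of $|U_{kl}|$. This yields exactly the index set $\mathcal{I}$ described in the statement, and combining with the first step gives the announced formula $P_{\mathcal{E}}(\mathbf{U})=\mathbf{U}_{\mathcal{I}}/\|\mathbf{U}_{\mathcal{I}}\|_F$.

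The main obstacles I anticipate are bookkeeping rather than conceptual: verifying admissibility of the selected support $\mathcal{I}$ (immediate from its construction), and handling non-uniqueness issues when $\mathbf{U}$ has ties in magnitude within some block $\mathcal{H}_i$ (which only creates a possibly non-singleton argmin, consistent with the ``$\in$'' in the definition of $P_{\mathcal{E}}$). The degenerate case $\mathbf{U}_{\mathcal{I}}=\mathbf{0}$, i.e.\ $\mathbf{U}=\mathbf{0}$ on the union of blocks indexed by the support, makes the objective constant on $\mathcal{E}$; any element of $\mathcal{E}$ is then a projection and the formula should be read with the convention that $\mathbf{U}_{\mathcal{I}}/\|\mathbf{U}_{\mathcal{I}}\|_F$ is replaced by an arbitrary unit-norm element of $\mathcal{E}$. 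Apart from this edge case, the two-step argument above is complete.
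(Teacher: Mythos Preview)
Your proposal is correct and follows essentially the same approach as the paper: both arguments expand $\|\mathbf{S}-\mathbf{U}\|_F^2$, first optimize the values on a fixed admissible support (yielding the normalized restriction $\mathbf{U}_{\mathcal{S}}/\|\mathbf{U}_{\mathcal{S}}\|_F$), and then optimize the support by exploiting the partition structure to decouple the choice across the blocks $\mathcal{H}_i$. Your version is slightly more explicit (Cauchy--Schwarz, tie and degenerate cases), but there is no substantive difference.
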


\begin{proof} Let $\mathbf{S}$ be an element of $\mathcal{E}$ and $\mathcal{J}$ its support. 
We have
$
\left\Vert \mathbf{S} - \mathbf{U} \right\Vert_F^2 
  = 1 + \left\Vert \mathbf{U} \right\Vert_F^2 
  - 2\langle\text{vec}(\mathbf{U}_\mathcal{J}),\text{vec}(\mathbf{S})\rangle.
$
For a given support, the matrix $\mathbf{S} \in \mathcal{E}$ maximizing $
\langle\text{vec}(\mathbf{U}_\mathcal{J}),\text{vec}(\mathbf{S})\rangle$ is $\mathbf{S} = \mathbf{U}_\mathcal{J}/\left\Vert\mathbf{U}_{\mathcal{J}}\right\Vert_F$. 
For this matrix, 
$
 \langle\text{vec}(\mathbf{U}_\mathcal{J}),\text{vec}(\mathbf{S})\rangle = \left\Vert\mathbf{U}_{\mathcal{J}}\right\Vert_F = \sqrt{\sum\nolimits_{i=1}^K\left\Vert\mathbf{U}_{\mathcal{J}\cap \mathcal{H}_i}\right\Vert_F^2}
 $
  which is maximized if $\mathcal{J}\cap \mathcal{H}_i$ corresponds to the $s_i$ entries with largest absolute value of $\mathbf{U}$ within $\mathcal{H}_i$,  $\forall i \in \{1,\dots,K\}$. 
\end{proof}

\subsection{Sparse and piecewise constant constraints}
Given $K$ pairwise disjoint sets $\mathcal{C}_i$ indexing matrix entries, consider now the constraint set corresponding to unit norm matrices that are constant over each index set $\mathcal{C}_i$, zero outside these sets, with no more than $s$ non-zero areas. In other words:
$
\mathcal{E}_c := \{\mathbf{S} \in \mathbb{R}^{p \times q} :  \exists \tilde{\mathbf{a}} =(\tilde{a}_{i})_{i=1}^{K}, \left\Vert\tilde{\mathbf{a}}\right\Vert_0\leq s,  \mathbf{S}_{\mathcal{C}_i} = \tilde{a}_i \forall i \in \{1,\dots,K\}, \mathbf{S}_{\overline{\bigcup_i\mathcal{C}_i}} = 0, \textrm{and}\ \left\Vert \mathbf{S} \right\Vert_F  = 1 \}.
$

Define $\tilde{\mathbf{u}} := (\tilde{u}_{i})_{i=1}^{K}$ with $\tilde{u}_{i} := \sum_{(m,n)\in \mathcal{C}_i}u_{mn}$, and denote $\tilde{\mathcal{J}} \subset \{1,\ldots,K\}$ the support of $\tilde{\mathbf{a}}$.

\begin{proposition}\label{prop:piecewisesparsity}
The projection of $\mathbf{U}$ onto $\mathcal{E}_c $ is obtained with $\tilde{\mathcal{J}}$ the collection of $s$ indices $i$ yielding the highest $|\tilde{u}_i|/\sqrt{|\mathcal{C}_i|}$, $\tilde{a}_i := \tilde{u}_i / \sqrt{\sum_{i \in \tilde{\mathcal{J}}} |\mathcal{C}_i| \tilde{u}_i^2}) $ if $i \in \tilde{\mathcal{J}}$, $\tilde{a}_i :=0$ otherwise. 
\end{proposition}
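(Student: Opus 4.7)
The plan is to follow the template of Proposition~\ref{prop:proxsparsity}: expand the squared Frobenius distance to reduce the projection to an inner-product maximization, then first solve for the optimal amplitudes on a frozen support using Cauchy--Schwarz, and finally optimize the choice of support.

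First, since any $\mathbf{S}\in\mathcal{E}_c$ satisfies $\|\mathbf{S}\|_F^2=1$, we have $\|\mathbf{S}-\mathbf{U}\|_F^2 = 1 + \|\mathbf{U}\|_F^2 - 2\langle\mathbf{S},\mathbf{U}\rangle$, so minimizing the distance is equivalent to maximizing $\langle\mathbf{S},\mathbf{U}\rangle$. Because $\mathbf{S}$ is equal to the constant $\tilde{a}_i$ on $\mathcal{C}_i$ for $i\in\tilde{\mathcal{J}}$ and zero elsewhere, the inner product collapses to $\langle\mathbf{S},\mathbf{U}\rangle = \sum_{i\in\tilde{\mathcal{J}}} \tilde{a}_i\,\tilde{u}_i$, while the unit-norm constraint rewrites as $\sum_{i\in\tilde{\mathcal{J}}} |\mathcal{C}_i|\,\tilde{a}_i^{\,2} = 1$.

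Next, I would freeze the support $\tilde{\mathcal{J}}$ and maximize over $(\tilde{a}_i)_{i\in\tilde{\mathcal{J}}}$. Setting $v_i := \sqrt{|\mathcal{C}_i|}\,\tilde{a}_i$ and $w_i := \tilde{u}_i/\sqrt{|\mathcal{C}_i|}$ turns the constraint into $\|\mathbf{v}\|_2 = 1$ and the objective into $\langle\mathbf{v},\mathbf{w}\rangle$, so Cauchy--Schwarz yields the upper bound $\sqrt{\sum_{i\in\tilde{\mathcal{J}}} \tilde{u}_i^2/|\mathcal{C}_i|}$, attained when $\mathbf{v}\propto\mathbf{w}$, i.e.\ when $\tilde{a}_i$ is proportional to $\tilde{u}_i/|\mathcal{C}_i|$ with the proportionality constant fixed by the unit-norm constraint.

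Finally, the value from the previous step depends on the support only through $\sum_{i\in\tilde{\mathcal{J}}} \tilde{u}_i^2/|\mathcal{C}_i|$, so maximizing over all $\tilde{\mathcal{J}}$ with $|\tilde{\mathcal{J}}|\le s$ reduces to keeping the $s$ largest values of $\tilde{u}_i^2/|\mathcal{C}_i|$, equivalently the $s$ indices maximizing $|\tilde{u}_i|/\sqrt{|\mathcal{C}_i|}$, as stated. Injecting this optimal support back into the amplitude formula of the previous step recovers the prescribed expression for $\tilde{a}_i$. The only subtle point is the symmetric distribution of the weights $|\mathcal{C}_i|$ between $\mathbf{v}$ and $\mathbf{w}$ in the Cauchy--Schwarz step, which is what makes both the norm constraint and the resulting upper bound separate cleanly over $i$; the rest is bookkeeping strictly analogous to the proof of Proposition~\ref{prop:proxsparsity}.
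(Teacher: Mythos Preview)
Your proposal is correct and follows essentially the same route as the paper's proof: both reduce the projection to maximizing $\langle\mathbf{S},\mathbf{U}\rangle$, apply the identical change of variables $v_i=\sqrt{|\mathcal{C}_i|}\,\tilde a_i$, $w_i=\tilde u_i/\sqrt{|\mathcal{C}_i|}$ (the paper writes $\tilde b_i,\tilde v_i$), invoke Cauchy--Schwarz to optimize the amplitudes on a fixed support, and then pick the support by keeping the $s$ largest $|\tilde u_i|/\sqrt{|\mathcal{C}_i|}$. The only cosmetic difference is that you name Cauchy--Schwarz explicitly where the paper phrases the same step as maximizing a scalar product on the unit sphere.
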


\begin{proof} Let $\mathbf{S}$ be an element of $\mathcal{E}_c$, and $\tilde{\mathcal{J}} \subset \{1,\ldots,K\}$ be the support of the associated $\tilde{\mathbf{a}}$. We proceed as for the previous proposition and notice that
$ \langle\text{vec}(\mathbf{U}),\text{vec}(\mathbf{S})\rangle 
 = \sum_{i \in \tilde{\mathcal{J}}} \langle\text{vec}(\mathbf{U}_{\mathcal{C}_i}),\text{vec}(\mathbf{S})\rangle
 = \sum_{i\in \tilde{\mathcal{J}}} \tilde{u}_{i}\tilde{a}_i 
 = \langle \tilde{\mathbf{u}}_{\tilde{\mathcal{J}}}, \tilde{\mathbf{a}}\rangle.
$
By the changes of variable $\tilde{b}_i = \sqrt{|\mathcal{C}_i|}.\tilde{a}_i$ and $\tilde{v}_i = \tilde{u}_i/\sqrt{|\mathcal{C}_i|}$ we get $\langle \tilde{\mathbf{u}}_{\tilde{\mathcal{J}}},\tilde{\mathbf{a}}\rangle = \langle \tilde{\mathbf{v}}_{\tilde{\mathcal{J}}},\tilde{\mathbf{b}}\rangle$ with $\tilde{\mathbf{b}}:=(\tilde{b}_{i})_{i=1}^{K}$. Given $\tilde{\mathcal{J}}$, maximizing this scalar product under the constraint $1 = \left\Vert\mathbf{S}\right\Vert_F = \Vert\mathbf{\tilde{b}}\Vert_2$ yields $\tilde{\mathbf{b}}^{*} := \tilde{\mathbf{v}}_{\tilde{\mathcal{J}}}/\Vert \tilde{\mathbf{v}}_{\tilde{\mathcal{J}}} \Vert_2$, and $\langle \tilde{\mathbf{v}}_{\tilde{\mathcal{J}}},\tilde{\mathbf{b}}^{*}\rangle = \Vert \tilde{\mathbf{v}}_{\tilde{\mathcal{J}}} \Vert_2$. Maximizing over $\tilde{\mathcal{J}}$ is achieved by selecting the $s$ entries of $\tilde{\mathbf{v}} :=(\tilde{\mathbf{v}})_{i=1}^{K}$ with largest absolute value (Proposition~\ref{prop:proxsparsity}). Going back to the original variables gives the result.   
\end{proof}

\section{Lipschitz modulus}
\label{app:lipmod}
To estimate the Lipschitz modulus of the gradient of the smooth part of the objective we write:
\begin{equation*}
\begin{array}{ll}
&  \left\Vert \nabla_{\mathbf{S}^i_j}H(\mathbf{L},\mathbf{S}_1,\mathbf{R},\lambda^i) - \nabla_{\mathbf{S}^i_j}H(\mathbf{L},\mathbf{S}_2,\mathbf{R},\lambda^i))\right\Vert_F \\ 
=& (\lambda^i)^2 \left\Vert \mathbf{L}^T\mathbf{L} (\mathbf{S}_1-\mathbf{S}_2) \mathbf{R}\mathbf{R}^T\right\Vert_F\\ 
\leq& (\lambda^i)^2\left\Vert \mathbf{R} \right\Vert_2^2. \left\Vert \mathbf{L} \right\Vert_2^2 \left\Vert\mathbf{S}_1-\mathbf{S}_2 \right\Vert_F.
\end{array}
\end{equation*}

\section{Covering dimension}
\label{app:covnum}
The covering number $\mathcal{N}(\mathcal{A},\epsilon)$ of a set $\mathcal{A}$ is the minimum number of balls of radius $\epsilon$ needed to cover it.
The precise definition of covering numbers is given in \cite{Gribonval2015}. The upper-box counting dimension of the set, loosely referred to as the covering dimension in the text is $d(\mathcal{A})=\lim_{\epsilon\to 0} \frac{\log \mathcal{N}(\mathcal{A},\epsilon)}{\log 1/\epsilon}$. We are interested in the covering dimension of the set of FA$\mu$STs $\mathcal{D}_{\text{spfac}}$.
We begin with the elementary sets $\mathcal{E}_j = \{\mathbf{A} \in \mathbb{R}^{a_j\times a_{j+1}} : \left\Vert \mathbf{A}\right\Vert_0\leq s_j,\left\Vert \mathbf{A}\right\Vert_F = 1\}$. These sets can be seen as sets of sparse normalized vectors of size $a_j\times a_{j+1}$.
 This leads following \cite{Gribonval2015} (with the Frobenius norm) to:
  \begin{equation*}
  \mathcal{N}(\mathcal{E}_j,\epsilon)\leq \binom{a_j a_{j+1}}{s_j}\left(1+\frac{2}{\epsilon}\right)^{s_j}.
  \end{equation*}
Defining $\mathcal{M}:= \mathcal{E}_1 \times \ldots \times \mathcal{E}_J$ and using \cite[lemma 16]{Gribonval2015} gives $\mathcal{N}(\mathcal{M},\epsilon)\leq \prod_{j=1}^{J}\binom{a_j a_{j+1}}{s_j}(1+\frac{2}{\epsilon})^{s_j}$ (wrt to the max metric over the index $j$ using the Frobenius norm). 
Using $\sum_{j=1}^J\left\Vert x_j-y_j \right\Vert_F\leq J\underset{j}{\text{max}} \left\Vert x_j-y_j \right\Vert_F$ gives $\mathcal{N}(\mathcal{M},\epsilon)\leq \prod_{j=1}^{J}\binom{a_j a_{j+1}}{s_j}(1+\frac{2J}{\epsilon})^{s_j}$ wrt to the metric defined by  $\rho(x,y) = \sum_{j=1}^J\left\Vert x_j-y_j \right\Vert_F$. 
Defining the mapping:
\begin{equation*}
\begin{array}{rll}
\Phi: & \mathcal{M}:= \mathcal{E}_1 \times \ldots \times \mathcal{E}_J &\rightarrow  \mathcal{D}_{\text{spfac}} \\
& (\mathbf{D}_1,\mathbf{D}_2,\ldots,\mathbf{D}_J) &\mapsto \mathbf{D}_J \ldots \mathbf{D}_2\mathbf{D}_1,
\end{array}
\end{equation*}
where $\mathcal{D}_{\text{spfac}}$ is the set of FA$\mu$STs of interest,
and using the distance measures $\rho(x,y) = \sum_{j=1}^J\left\Vert x_j-y_j \right\Vert_F$ in $\mathcal{M}$ and $\rho_1(x,y) = \left\Vert x-y \right\Vert_F$ in $\mathcal{D}_{\text{spfac}}$ we get that the mapping $\Phi$ is a contraction (by induction). 
We can conclude that:
\begin{equation*}
\mathcal{N}(\mathcal{D}_{\text{spfac}},\epsilon)\leq \prod\nolimits_{j=1}^{J}\binom{a_j a_{j+1}}{s_j}\left(1+\frac{2J}{\epsilon}\right)^{s_j},
\end{equation*}
wrt the $\rho_1$ metric.
Using $\binom{n}{p}\leq \frac{n^p}{p!}$ and $n!\geq \sqrt{2\pi n} \left( \frac{n}{e}\right)^n$ yields 
$\mathcal{N}(\mathcal{D}_{\text{spfac}},\epsilon)\leq \prod_{j=1}^{J}\frac{1}{\sqrt{2\pi s_j}}\left(\frac{e}{s_j}a_j a_{j+1}(1+\frac{2J}{\epsilon})\right)^{s_j}$.  
Defining $C_j = \frac{e.a_j. a_{j+1}.(2J+1)}{s_j.\sqrt[2s_j]{2\pi s_j}}$, we have $\mathcal{N}(\mathcal{D}_{\text{spfac}},\epsilon) \leq (\frac{C}{\epsilon})^h$, with $h = \sum_{j=1}^{J}s_j$ and $C = \underset{j}{\text{max}} C_j$. We thus have $d(\mathcal{D}_{\text{spfac}})\leq h = \sum_{j=1}^{J}s_j = s_{tot}$.

\section*{Acknowledgment}
{ The authors wish to thank Fran\c{c}ois Malgouyres and Olivier Chabiron for discussions that helped in producing this work. The authors also express their gratitude to Alexandre Gramfort for providing the MEG data and contributing to \cite{Lemagoarou2015a}. Finally, the authors thank the reviewers for their valuable comments.

\ifCLASSOPTIONcaptionsoff
  \newpage
\fi



\bibliographystyle{IEEEtran}
%



%


\vfill
\begin{IEEEbiography}[{\includegraphics[width=1in,height=1.25in,clip,keepaspectratio]{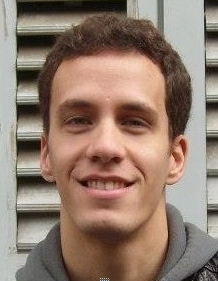}}]{Luc Le Magoarou} is a PhD student at Inria (Rennes, France). He received the M.Sc. in electrical engineering from the National Institute of Applied Sciences (INSA), Rennes, France, in 2013. His main research interests lie in signal processing and machine learning, with an emphasis on computationally efficient methods and matrix factorization.
\end{IEEEbiography}

\vfill
\begin{IEEEbiography}[{\includegraphics[width=1in,height=1.25in,clip,keepaspectratio]{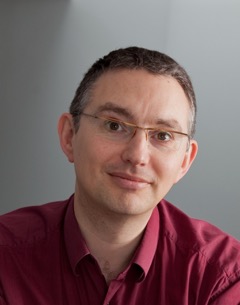}}]{R\'emi Gribonval}(FM'14)  is a Senior Researcher with Inria (Rennes, France), and the scientific leader of the PANAMA research group on sparse audio processing. A former student at  {\'E}cole Normale Sup{\'e}rieure (Paris, France), he received the Ph. D. degree in applied mathematics from Universit{\'e} de Paris-IX Dauphine (Paris, France) in 1999, and his Habilitation {\`a} Diriger des Recherches in applied mathematics from Universit{\'e} de Rennes~I (Rennes, France) in 2007. His research focuses on mathematical signal processing, machine learning, approximation theory and statistics, with an emphasis on sparse approximation, audio source separation, dictionary learning and compressed sensing. 
He founded the series of international workshops SPARS on Signal Processing with Adaptive/Sparse Representations. In 2011, he was awarded the Blaise Pascal Award in Applied Mathematics and Scientific Engineering from the SMAI by the French National Academy of Sciences, and a starting investigator grant from the European Research Council. 
\end{IEEEbiography}





\end{document}